\documentclass{article}

\usepackage[preprint]{neurips_2026}
\usepackage[utf8]{inputenc}
\usepackage[T1]{fontenc}
\usepackage[american]{babel}
\usepackage{hyperref}
\usepackage{url}
\usepackage{graphicx}
\usepackage{mathtools}
\usepackage{amssymb}
\usepackage{array}
\usepackage{booktabs}

\usepackage{nicefrac}
\usepackage{microtype}
\usepackage{xcolor}

\hypersetup{%
  pdftitle={Conditioning Tree-Based Diffusions and Flows for Probabilistic Tabular Regression},
  pdfauthor={Silas Koemen},
  pdfsubject={Probabilistic tabular regression with gradient-boosted diffusion and flow models},
  pdfkeywords={diffusion models, flow matching, gradient boosting, probabilistic regression, tabular data},
}

\newcommand{\crps}{\mathrm{CRPS}}
\newcommand{\crpss}{\mathrm{CRPSS}}

\newcommand{\E}{\mathbb{E}}
\newcommand{\Var}{\mathrm{Var}}
\newcommand{\Norm}{\mathcal{N}}
\newcommand{\dd}{\mathrm{d}}

\newtheorem{lemma}{Lemma}
\newtheorem{proposition}{Proposition}

\title{Conditioning Tree-Based Diffusions and Flows for Probabilistic Tabular Regression}

\author{%
  Silas Koemen\thanks{ORCID: \href{https://orcid.org/0009-0004-2843-9362}{0009-0004-2843-9362}.}\\
  Independent Researcher\\
  \texttt{research@silaskoemen.com}
}

\begin{document}
\maketitle

\begin{abstract}
Tree-based diffusion models fit flexible conditional predictive distributions for tabular regression without a neural density estimator, but they inherit their design defaults---noising path, parameterization, training distribution, features, sampler---from the neural setting. We show these defaults are the binding constraint: what a gradient-boosted ensemble actually solves is a supervised regression problem whose conditioning they determine. We present DiffGBM, which makes them explicit along two axes. First, a Gaussian-path flow-matching trainer for $p(y \mid x)$ that learns a velocity field directly and recovers the score algebraically, admitting few-step deterministic ODE sampling. Second, we expose the score-side recipe---residualization, EDM-style preconditioning, log-sigma time sampling, noise-level features, loss weighting, and histogram resolution---as jointly tunable axes over a shared LightGBM surface rather than one frozen bundle. This \emph{score-flex} space represents the published recipe as a special case; across eleven tabular benchmarks under fold-0 tuning, folds-1--5 evaluation, and a matched 40-trial budget and sampler, the selected configurations beat that baseline on \emph{every} dataset (paired Wilcoxon $11/0$, $p<10^{-3}$), with the best aggregate CRPS skill (0.725 vs.\ 0.699) of any row. The two rows are complementary: score-flex buys accuracy with a stochastic sampler and is the slowest row, while flow matching is the cheapest sampler ($5.2\times$ faster than the published baseline) and the best-calibrated DiffGBM row. Tuned non-diffusion baselines still win individual datasets, and stochastic ($\varepsilon>0$) flow samplers do not Pareto-dominate the deterministic corner.
\end{abstract}

\section{Introduction}\label{sec:intro}
Probabilistic regression on tabular data must combine the predictive strength of gradient-boosted trees with full predictive distributions usable for calibration, decision-making, and uncertainty quantification. Treeffuser~\citep{treeffuser} achieves this by training LightGBM~\citep{ke2017lightgbm} regressors to estimate the score of a noise-perturbed conditional density and integrating the corresponding reverse-time stochastic differential equation (SDE)~\citep{song2021score} at inference. The framework can produce well-calibrated predictive distributions when the score is estimated successfully, while avoiding neural-network density estimators. We call our extension \emph{DiffGBM}: it keeps this tree-based backbone but treats the noising path, parameterization, training distribution, features, and sampler as tunable conditioning choices rather than fixed defaults.

Our notion of tractability is computational rather than likelihood-exact: predictive quantities are Monte Carlo estimates, but training reduces to supervised tree regression, the final FM row samples with a five-step Heun ODE, and Eq.~\eqref{eq:score} recovers scores when stochastic sampling is desired.

We develop DiffGBM along two axes: score-side modifications matched to histogram-bin tree learners, and Gaussian-path flow matching~\citep{lipman2023flow,liu2023rectified}, which removes the indirect score-to-drift mapping and opens access to stochastic interpolants~\citep{albergo2023stochastic} with non-negative stochasticity schedule $\varepsilon(t)$.

\paragraph{Contributions.} We make three contributions:
\begin{enumerate}
  \item \emph{Flow matching for conditional tree-based prediction:} a Gaussian-path FM trainer for $p(y \mid x)$ that learns a velocity field directly with gradient-boosted trees, with algebraic score recovery when stochastic-interpolant sampling or endpoint-error analysis requires it (\S\ref{sec:fm}, Lemma~\ref{lem:wronskian-score}). This yields the cheapest sampler and the best-calibrated DiffGBM operating point.
  \item \emph{A jointly tunable score recipe for tree-based diffusion:} conditional-mean residualization, EDM preconditioning, log-$\sigma$ time sampling, an explicit log-$\sigma$ feature, loss weighting, and histogram resolution, exposed as recipe axes over the shared LightGBM surface (\S\ref{sec:score-side}). The resulting score-flex space represents the published recipe as a special case and, tuned per dataset at a matched budget and sampler, beats it on every benchmark (\S\ref{sec:headline}).
  \item \emph{Regression conditioning as the shared bottleneck:} an account of why both sets of gains appear, grounded in the finite split resolution available to a histogram tree rather than in population-level properties of the objectives (\S\ref{sec:conditioning}, \S\ref{sec:disc}, Appendix~\ref{app:theory}).
\end{enumerate}

Across eleven benchmarks, with one fold for tuning and five for evaluation, the jointly tuned score-flex recipe gives the best aggregate CRPS skill score of any row and beats the published baseline on every dataset, while flow matching gives the cheapest sampler and the best-calibrated DiffGBM operating point. The shared lesson is that noising path, parameterization, training distribution, features, histogram resolution, and sampler must make the induced supervised problem well matched to histogram splits, and that these are best exposed as tunable axes rather than fixed as one bundle.

\section{Related Work}\label{sec:related}

\paragraph{Probabilistic prediction with tree ensembles.} Quantile regression forests~\citep{meinshausen2006qrf} and quantile-regression boosting estimate conditional quantiles directly, fitting one model per quantile level; they remain strong baselines and we tune one here. Distributional methods instead fit a parametric predictive law per input---NGBoost~\citep{duan2020ngboost} boosts distributional parameters under the natural gradient, and CatBoost's uncertainty mode~\citep{prokhorenkova2018catboost} predicts a mean and a variance---so their sharpness is bounded by the assumed family. Nonparametric alternatives keep the predictive law implicit: iBUG~\citep{brophy2022ibug} builds a predictive distribution from the $k$ nearest training instances in tree-affinity space, and distributional random forests~\citep{cevid2022drf} estimate conditional distributions through forest weights. A separate line targets uncertainty quantities other than the sharpness-and-coverage regime studied here: SGLB~\citep{ustimenko2021sglb} uses Langevin boosting for epistemic uncertainty and out-of-distribution detection; Boulevard~\citep{zhou2022boulevard} yields asymptotically valid intervals without a full generative predictive law; and conformal wrappers such as conformalized quantile regression~\citep{romano2019cqr} give finite-sample marginal coverage on top of any base learner, which is orthogonal to---and composable with---the generative question we study. Our benchmark therefore covers the sample-based and quantile families (\S\ref{sec:exp}) and treats the remaining three as answering different questions.

\paragraph{Diffusion and flow matching with trees.} Treeffuser~\citep{treeffuser}, described in \S\ref{sec:intro}, is the direct predecessor and the baseline we extend. Forest-Diffusion~\citep{jolicoeurmartineau2024forest} independently established that gradient-boosted trees can carry a velocity field as well as a score, training XGBoost regressors under both score-based diffusion and conditional flow matching. Its target differs from ours in three ways that shape the design. It models the joint distribution over all columns for unconditional row generation and imputation rather than a conditional predictive law $p(y \mid x)$ scored by proper scoring rules; it trains a separate model per variable and per noise level ($p \times n_t$ models, with the training set duplicated across levels), whereas we fit one regressor per response coordinate with $t$ as a feature, which is what makes the noise-level training distribution, the noise-level features, and the loss weighting design axes here; and it does not use a velocity-to-score identity to move a velocity-trained model into score-based or stochastic-interpolant samplers. We take tree-based flow matching as established and ask instead what conditioning the induced regression problem needs for conditional prediction, and which of those choices should be fixed versus tuned. On the neural side, TabDDPM~\citep{kotelnikov2023tabddpm} applies diffusion to tabular generation and CARD~\citep{han2022card} to conditional regression; CARD is a baseline in \S\ref{sec:exp}.

\paragraph{Design axes for diffusion and flow models.} The axes we expose are established in the neural literature. EDM~\citep{karras2022edm} treats preconditioning, noise-level parameterization, and the training-time noise distribution as explicit design choices; min-SNR-$\gamma$~\citep{hang2023minsnr} reweights the loss across noise levels; flow matching~\citep{lipman2023flow} and rectified flow~\citep{liu2023rectified} learn velocity fields on Gaussian paths; and stochastic interpolants~\citep{albergo2023stochastic} unify deterministic and stochastic samplers for those paths. Our contribution is not a new axis but the finding that good settings for a histogram-split tree learner differ from the neural defaults, and differ from each other across datasets, which is why we tune them jointly.

\section{Method}\label{sec:method}

\subsection{Score-Side Modifications}\label{sec:score-side}
The published Treeffuser uses VESDE marginals with uniform $t$ sampling, a noise-prediction target, and LightGBM features $[\,y_t,\,X,\,t\,]$. We change four pieces: response/input conditioning, allocation of training rows over noise levels, explicit noise-level features, and sampler efficiency.

\paragraph{Conditional-mean residualization.} We fit a cross-validated LightGBM mean predictor $\hat{\mu}(x)$ and apply diffusion to $y - \hat{\mu}(x)$ rather than $y$, moving conditional location out of the score-regression target. On the few-step FM ODE, residualization improves aggregate CRPS and 90\% coverage error decisively (81/100 paired wins on $|\text{cE}|$@90; Appendix~\ref{app:off-vs-mean}). The FM row uses a fixed high-capacity residualizer (configuration C), selected from the diagnostic sweep of Appendix~\ref{app:resid-config} rather than tuned jointly with the flow hyperparameters; that sweep's capacity frontier is flat, spanning CRPSS $0.509$--$0.523$. On large datasets a fixed residualizer can become a bottleneck once the inner model has enough data to represent the conditional mean directly: removing it roughly halves the frozen score$^+$ deficit on \texttt{ct\_slices}, while FM depends on it (Appendix~\ref{app:large-data}). Tuning residualization jointly with the rest of the score recipe closes that gap, so the headline score model treats it as a tunable axis rather than a fixed choice.

\paragraph{EDM preconditioning.} Following \citet{karras2022edm}, we predict a denoised target $D_\theta(y_t, t)$ scaled to unit variance across noise levels and reconstruct the score $s(y_t, t) = (D_\theta - y_t)/\sigma(t)^2$. This stabilizes the regression target for trees, which would otherwise need to extrapolate noise-prediction magnitudes across $t$.

\paragraph{Log-sigma time sampling.} We draw $\log\sigma(t) \sim \Norm(-1.2, 1.2^2)$, clip to the SDE's achievable range, and invert. Because trees bin features by data density, the training-time distribution directly controls where the score model spends capacity.

\paragraph{Log-sigma noise feature.} The feature vector for LightGBM becomes $[\,y_t,\,X,\,t,\,\log\sigma(t)\,]$, giving the regressor an explicit noise-level signal aligned with EDM scaling.

\subsection{Flow Matching with Gaussian Paths}\label{sec:fm}
A Gaussian flow path interpolates data $y_0$ and a standard-normal prior $z$ via
\begin{equation}\label{eq:path}
y_t = \alpha(t)\, y_0 + \beta(t)\, z, \quad t \in [0,1],
\end{equation}
with boundary conditions $\alpha(0)=1$, $\alpha(1)=0$, $\beta(0)=0$, $\beta(1)=1$. We compare three paths: \emph{linear} ($\alpha = 1-t$, $\beta = t$), \emph{trigonometric} ($\alpha = \cos(\pi t/2)$, $\beta = \sin(\pi t /2)$), and a \emph{VP} schedule with linear $\beta_t$ giving $\alpha^2(t) = \exp(-T(t))$, $T(t) = \tfrac{1}{2}\beta_{\min} t + \tfrac{1}{4}(\beta_{\max} - \beta_{\min}) t^2$. The linear and trigonometric paths satisfy $\alpha(1)=0$ exactly; the VP schedule satisfies it only in the limit, giving $\alpha(1)=0.081$ at the $\beta_{\min}=0.1$, $\beta_{\max}=20$ setting we use, so its standard-normal initialization is an endpoint approximation, as in the standard discrete VP/DDPM schedule. The training target is the conditional velocity $u_t = \alpha'(t) y_0 + \beta'(t) z$, regressed against the model's $v_\theta(y_t, x, t)$.

\paragraph{Velocity-to-score formula.} For any Gaussian path with Wronskian $W(t) = \alpha(t) \beta'(t) - \alpha'(t) \beta(t)$, the conditional score is algebraically recoverable from the population velocity:
\begin{equation}\label{eq:score}
\nabla \log p_t(y_t \mid x) \;=\; \frac{\alpha'(t)\, y_t - \alpha(t)\, v_\theta(y_t, x, t)}{W(t)\,\beta(t)}.
\end{equation}
For linear FM this reduces to $-(y_t + (1-t) v)/t$; for trig to $-y_t - (2/\pi)\cot(\pi t/2) v$.
We use this identity operationally: it lets a velocity-trained DiffGBM model enter the same stochastic-interpolant sampler family as score matching and exposes the endpoint amplification studied in Appendix~\ref{app:theory}.

\paragraph{Stochastic-interpolant family.} \citet{albergo2023stochastic} show that any non-negative $\varepsilon(t)$ defines a marginal-preserving reverse-time sampler at the population velocity/score:
\begin{equation}\label{eq:sde}
\dd y_\tau = \left(-v_\theta + \tfrac{\varepsilon^2}{2}\, s_\theta\right) \dd \tau + \varepsilon\, \dd W_\tau,
\end{equation}
where $\tau = 1 - t$ is reverse time and $s_\theta$ is the score from \eqref{eq:score}. The choice $\varepsilon \equiv 0$ recovers the deterministic probability-flow ODE. At the true velocity and score, every non-negative $\varepsilon(t)$ preserves the marginals and only randomizes the trajectories; with an approximate model it can shift the realized dispersion in either direction, so we treat it as a calibration knob. We use the schedule $\varepsilon(t) = c\, t$, which vanishes at the data endpoint and suppresses recovered-score error in the stochastic drift (see Proposition~\ref{prop:endpoint-amp}).

\paragraph{Why these choices suit trees.} Eq.~\eqref{eq:score} makes FM compatible with stochastic-interpolant samplers; variance-preserving paths keep the tree input $y_t$ near unit scale, unlike a linear path with $\Var(y_t)\approx(1-t)^2+t^2$; and $\varepsilon=0$ avoids recovered-score error in the stochastic drift (Appendix~\ref{app:theory}). Path choice is therefore a finite-regression and sampler tradeoff rather than a settled hierarchy.

\subsection{Why Conditioning, and Not the Objective}\label{sec:conditioning}
For a fixed Gaussian interpolation path, the score and flow objectives are equivalent at the population level: Eq.~\eqref{eq:score} maps between them exactly. Different paths induce different $p_t$ but share the $t=0$ endpoint, so no choice among the objectives or paths we compare changes the target predictive law. What differs is the finite supervised problem each one hands to LightGBM, and the constraint that binds there is split resolution. A histogram tree does not see $y_t$; it sees a bucketed version of $y_t$, and in our trainer rows from every noise level are pooled into one feature matrix before binning. A single set of bin edges must therefore serve every $t$ at once.

Proposition~\ref{prop:histogram-bins} (Appendix~\ref{app:theory}) makes the cost of that compromise precise in a one-feature proxy: the error of a pooled histogram representation carries a term $\int p_t / \bar p^{\,2}$, which blows up wherever the pooled bin density $\bar p$ allocates little resolution to a region where an individual noise level $p_t$ still places mass. A VE path creates exactly this situation, because its tree input spans data scale at low noise and expands by orders of magnitude at high noise; one shared binning cannot resolve both regimes. This is a statement about finite bin allocation, not about the smoothness of the fixed-$t$ regression function---Figure~\ref{fig:toy-preconditioning}(c) fits a fresh LightGBM at each fixed $t$ and finds the VE score and velocity curves indistinguishable.

Read this way, the modifications of \S\ref{sec:score-side} and the flow paths of \S\ref{sec:fm} do the same job by different means. EDM input preconditioning rescales the tree input toward a common range across $t$; variance-preserving flow paths keep it near unit scale by construction; log-$\sigma$ sampling and the explicit log-$\sigma$ feature control where capacity is spent and let splits condition on the noise regime directly; residualization removes predictable conditional location from the target. Figure~\ref{fig:toy-preconditioning}(b) shows the effect empirically. This also predicts the axis on which the story is not universal: histogram resolution itself is a lever, so a dataset whose difficulty is concentrated at fine scales can prefer more bins over better preconditioning---which is what we observe on \texttt{ct\_slices} (\S\ref{sec:headline}, Appendix~\ref{app:large-data}).

\subsection{Implementation Notes}\label{sec:impl}
The FM code reuses Treeffuser's per-output-dimension LightGBM fitter and feature pipeline; only the target and prior change. Sampling uses the same Heun integrator and residualizer as the score path, and both training and sampling avoid the singular endpoint with $t \geq 10^{-5}$. Because the learned score or velocity components are ordinary LightGBM regressors, native LightGBM handling of categorical features, missing-value splits, and constraints can be used through the same feature pipeline and parameter pass-through.

\section{Experiments}\label{sec:exp}

\paragraph{Datasets and protocol.} Eleven standard tabular regression benchmarks (Table~\ref{tab:per-dataset})---nine from the UCI repository, including the large CT-slice localization dataset, and two from scikit-learn---are evaluated with a nested tuning protocol. We split each dataset into six folds: fold~0 is the only fold used for hyperparameter selection, and folds~1--5 are evaluation folds. For each dataset/model family, Optuna tunes on the fold-0 train/validation split; the selected configuration is then refit and evaluated on each held-out evaluation fold. Fold~0 is never a test fold, though it does enter the training set of the models evaluated on folds~1--5. Evaluation-fold training rows range from 256 on yacht to 44{,}583 on CT slices. For sample-based methods, each test point uses 200 samples; DiffGBM sampler choices are detailed in Appendix~\ref{app:datasets}. Each predictive metric is therefore a Monte Carlo estimate, but it is averaged over all held-out test points and five evaluation folds, so the residual sampling error in the aggregate CRPS and coverage numbers is small relative to the dataset-level standard errors reported in Table~\ref{tab:robustness-ranks}. Metrics are averaged over evaluation folds and, for headline summaries, over datasets.

\paragraph{Metrics.} We report CRPS skill score $\crpss = 1 - \crps_{\text{model}} / \crps_{\text{climatology}}$, mean rel-CRPS, DSS~\citep{dawid1984pit}, PIT-KS pass rate at $\alpha=0.05$, absolute coverage error at 50/90/95\% nominal levels~\citep{gneiting2007crps}, and sample-generation time. CRPSS and rel-CRPS are the aggregate comparison metrics because raw CRPS is target-scale dependent; mean climatological CRPS ranges from 0.0043 on \texttt{naval} to 44.8 on \texttt{diabetes}. Timings are measured on one Apple-arm64 workstation, exclude fitting, and include generating 200 samples per test point plus inverse transformations. They use practical benchmark settings rather than a single-thread cap: DiffGBM samples in batches with \texttt{n\_parallel=20} and LightGBM \texttt{n\_jobs=-1}; other libraries use their configured parallel defaults.

\paragraph{Variants compared.} We report two DiffGBM rows against the published Treeffuser baseline. \textsc{published} is the original recipe (noise prediction, uniform $t$, no residualization) tuned per dataset. \textsc{DiffGBM-score-flex} exposes the score-side recipe axes of \S\ref{sec:score-side}---score parameterization, noise-level features, $t$ sampling and its log-$\sigma$ prior, loss weighting, residualization, and histogram resolution---as jointly tunable dimensions over the shared LightGBM surface, under the same 50-step Euler SDE sampler as the published baseline, so the two rows are matched on both budget and sampler. The space contains the published recipe as a special case. A PF-ODE twin of the same recipe space is tuned separately and reported as an additional operating point (Appendix~\ref{app:large-data}); it trades CRPS for tighter interval coverage. \textsc{DiffGBM-FM} is VP flow matching with residualization and a 5-step Heun ODE, retained as the cheapest well-calibrated operating point. Each row is tuned per dataset under the same fold-0 protocol with an equalized 40-trial budget. The residualizer-C configuration used in the FM row was selected from the prior diagnostic sweep in Appendix~\ref{app:resid-config} and then fixed; it is not tuned jointly with the flow hyperparameters. The frozen score$^+$ bundle---the baseline plus \S\ref{sec:score-side} under a PF-ODE sampler---is subsumed by score-flex and reported as a fast, well-calibrated ablation point in Appendix~\ref{app:competitors}. We also compare against six per-dataset tuned probabilistic baselines: NGBoost~\citep{duan2020ngboost}, quantile-regression LightGBM~\citep{ke2017lightgbm}, CatBoost RMSE-with-uncertainty~\citep{prokhorenkova2018catboost}, iBUG~\citep{brophy2022ibug}, a Gaussian deep ensemble~\citep{lakshminarayanan2017deep}, and CARD-style neural diffusion~\citep{han2022card}.

\begin{table*}[!t]
  \centering
  \caption{Headline real-data results, mean over the eleven benchmark datasets (nine UCI, including the large CT-slice localization dataset, and two scikit-learn) with fold~0 used for tuning and folds~1--5 used for evaluation. rel-CRPS is the per-dataset CRPS divided by the best displayed variant on that dataset. $|\text{cE}|$@$q$ is absolute coverage error at the $q$\% interval level. Wall-clock timing is reported separately over the ten non-CT datasets in Table~\ref{tab:timing}, because the large CT-slice dataset dominates raw-second means (Appendix~\ref{app:large-data}). The CARD row is a compact benchmark adapter, not a fully optimized CARD pipeline (see Appendix~\ref{app:competitors}).}\label{tab:headline}
  \scriptsize
  \begin{tabular}{lrrrrrr}
    \toprule
    Variant & CRPSS $\uparrow$ & rel-CRPS $\downarrow$ & KS $p{>}.05$ $\uparrow$ & $|\text{cE}|$@50 $\downarrow$ & $|\text{cE}|$@90 $\downarrow$ & $|\text{cE}|$@95 $\downarrow$ \\
    \midrule
    Treeffuser-published & 0.699 & 1.248 & 0.27 & 0.142 & 0.047 & 0.026 \\
    DiffGBM-score-flex (ours) & \textbf{0.725} & \textbf{1.106} & 0.33 & 0.112 & 0.049 & 0.037 \\
    DiffGBM-FM (ours) & 0.707 & 1.334 & \textbf{0.45} & \textbf{0.067} & \textbf{0.029} & 0.021 \\
    \midrule
    QReg-LightGBM & 0.694 & 1.826 & 0.31 & 0.085 & 0.045 & 0.043 \\
    CatBoost-unc. & 0.666 & 1.815 & 0.25 & 0.096 & 0.080 & 0.066 \\
    NGBoost & 0.626 & 2.471 & 0.15 & 0.117 & 0.111 & 0.098 \\
    Deep ensemble & 0.685 & 1.556 & 0.24 & 0.108 & 0.036 & \textbf{0.019} \\
    iBUG & 0.679 & 1.564 & 0.33 & 0.119 & 0.052 & 0.060 \\
    CARD-style diffusion & 0.662 & 1.746 & 0.22 & 0.085 & 0.059 & 0.044 \\
    \bottomrule
  \end{tabular}
\end{table*}

\subsection{Headline Result}\label{sec:headline}
Table~\ref{tab:headline} reports cross-dataset means under the tuning/evaluation protocol. The jointly tuned score-flex recipe lifts CRPSS from 0.699 to 0.725 and normalized CRPS from 1.248 to 1.106---the best of any row on both aggregate accuracy metrics---and beats the published baseline on \emph{every} one of the eleven datasets (paired Wilcoxon $11/0$, $W{=}0$, one-sided $p=4.9\times10^{-4}$). A Friedman test over the nine families and eleven datasets rejects rank equality ($\chi^2=32.1$, $p<10^{-3}$), and score-flex holds the best mean CRPS rank ($1.91$) by more than a rank-and-a-half over the next family. FM is the best-calibrated DiffGBM row---lowest 50/90\% coverage error ($0.067$/$0.029$) and highest PIT pass rate ($0.45$)---and the cheapest sampler (Table~\ref{tab:timing}), while remaining competitive on CRPSS ($0.707$); it trades some accuracy on the largest datasets for that speed and calibration. The headline FM row uses the VP path because it is the fastest DiffGBM operating point; the tuned mechanism ablation identifies residualized linear FM as the CRPS-optimal FM corner, so we treat VP and linear as latency and CRPS endpoints rather than a single winning path (Table~\ref{tab:mechanism-ablation}). Per-dataset raw-CRPS winners are a DiffGBM row on $9/11$ datasets (score-flex on seven, FM on two), with only \texttt{diabetes} and \texttt{kin8nm} going to the deep ensemble. Unlike the frozen score$^+$ bundle, score-flex reaches this accuracy with the Euler-SDE sampler rather than the PF-ODE, so it is an accuracy operating point and not a speed win: its coverage error is looser than FM's and its sampling cost tracks the EDM no-residualizer score path (Appendix~\ref{app:large-data}); FM and the frozen score$^+$ ablation (Appendix~\ref{app:competitors}) remain the cheap, tightly calibrated corners. Because the external families are tuned on an equal finite-trial budget but with differing search spaces and per-trial costs (\S\ref{sec:disc}), the cross-family rows are informative about the tradeoff surface rather than definitive head-to-head rankings.

\paragraph{End-to-end timing.} Table~\ref{tab:headline} excludes fitting, but fit time is logged on the same runs, so Table~\ref{tab:timing} reports both over the ten non-CT datasets (the large CT-slice dataset dominates raw-second means and is timed separately in Appendix~\ref{app:large-data}). Two points follow. First, FM is the cheapest end-to-end DiffGBM row (61.3\,s fit$+$sample), well ahead of the published SDE (196.3\,s), so its accuracy and calibration are not bought at latency. Second, the score-flex accuracy row is the most expensive DiffGBM operating point (323.0\,s): its tuned optimum selects the Euler-SDE sampler over an EDM no-residualizer score path, which trades wall-clock for the aggregate CRPS win rather than saving it---the frozen score$^+$ PF-ODE ablation (Appendix~\ref{app:competitors}) is the fast, tightly calibrated score corner. Near-zero sampling time still does not imply lower end-to-end cost: quantile-regression LightGBM samples in 0.30\,s but fits 24--49 separate quantile models, so its total cost is 69.7\,s, and the deep ensemble is 46.3\,s. Fit times are environment-specific and use library-default parallelism, so we read them as the same kind of practical engineering benchmark as the sampling timings.

\begin{table}[!t]
  \centering
  \caption{Fit, sample, and end-to-end time (mean over the ten non-CT benchmarks and evaluation folds; same runs as Table~\ref{tab:headline}). Fit time is per evaluation fold; sample time generates 200 samples per test point.}\label{tab:timing}
  \scriptsize
  \begin{tabular}{lrrr}
    \toprule
    Variant & fit (s) & sample (s) & total (s) \\
    \midrule
    Treeffuser-published & 13.60 & 182.71 & 196.32 \\
    DiffGBM-score-flex (ours) & 26.89 & 296.09 & 322.98 \\
    DiffGBM-FM (ours) & 25.84 & 35.46 & 61.30 \\
    \midrule
    QReg-LightGBM & 69.34 & 0.30 & 69.65 \\
    CatBoost-unc. & 2.34 & \textbf{0.01} & \textbf{2.35} \\
    NGBoost & 7.83 & 0.04 & 7.87 \\
    Deep ensemble & 46.26 & 0.03 & 46.29 \\
    iBUG & \textbf{0.98} & 2.89 & 3.87 \\
    CARD-style diffusion & 23.19 & 42.84 & 66.03 \\
    \bottomrule
  \end{tabular}
\end{table}

\subsection{Per-Dataset CRPSS}\label{sec:perdata}
Table~\ref{tab:per-dataset} shows the per-dataset CRPSS for the two DiffGBM rows and the published baseline. Score-flex is the best DiffGBM row on 8 of the 11 datasets, including all four largest (\texttt{california\_housing}, \texttt{protein}, and both large near-deterministic tasks), while FM wins the three small-to-mid sets where its residualized few-step ODE is strongest (\texttt{yacht}, \texttt{concrete}, \texttt{kin8nm}) and nearly ties on \texttt{energy} (CRPSS equal to three decimals; raw CRPS $0.202$ versus score-flex's $0.199$). The published baseline is no longer the best DiffGBM row on any dataset. The tuned comparison thus reads as a clean split: score-flex is the accuracy choice and dominates as data grows, while FM is the fast, well-calibrated choice on smaller data.

\begin{table}[!t]
  \centering
  \caption{Per-dataset CRPSS (higher is better) and post-split training rows. Bold marks the best DiffGBM row per dataset.}\label{tab:per-dataset}
  \footnotesize
  \begin{tabular}{lrrrr}
    \toprule
    Dataset & Train rows & Published & Score-flex & FM \\
    \midrule
    yacht & 256 & 0.939 & 0.945 & \textbf{0.964} \\
    diabetes & 368 & 0.193 & \textbf{0.241} & 0.215 \\
    energy & 640 & 0.959 & \textbf{0.965} & \textbf{0.965} \\
    concrete & 858 & 0.712 & 0.759 & \textbf{0.772} \\
    wine & 5414 & 0.359 & \textbf{0.395} & 0.310 \\
    kin8nm & 6826 & 0.566 & 0.627 & \textbf{0.631} \\
    power & 7973 & 0.833 & \textbf{0.849} & 0.845 \\
    naval & 9945 & 0.962 & \textbf{0.970} & 0.952 \\
    cali.\ housing & 17200 & 0.682 & \textbf{0.698} & 0.676 \\
    protein & 38108 & 0.499 & \textbf{0.539} & 0.479 \\
    ct slices & 44583 & 0.988 & \textbf{0.989} & 0.966 \\
    \bottomrule
  \end{tabular}
\end{table}

\subsection{Robustness Checks}\label{sec:robustness}
Table~\ref{tab:robustness-ranks} summarizes the same tuned artifacts by per-dataset ranks and dataset-level standard errors. Score-flex takes the best mean CRPS rank ($1.91$) by more than a rank-and-a-half over the next family, with FM second among all nine families; the published baseline drops to fourth. A Friedman test over the nine families and eleven datasets rejects equal CRPS ranks ($\chi^2=32.1$, $p<10^{-3}$), and the Nemenyi critical difference of $3.62$ mean-rank points separates score-flex from the four weakest families (NGBoost, CARD, CatBoost, iBUG). The conservative all-pairs post-hoc does not by itself separate score-flex from the published SDE, quantile-regression LightGBM, or the deep ensemble, but the targeted within-family paired test does: score-flex beats the published baseline on all eleven datasets (Wilcoxon $W{=}0$, $p=4.9\times10^{-4}$; \S\ref{sec:headline}). The aggregate CRPSS gaps remain modest relative to the dataset standard errors, so we read the cross-family ranking as robustness context and the paired within-family test as the decisive claim.

\begin{table}[!t]
  \centering
  \caption{Robustness summary from the tuned evaluation artifacts. Ranks average per-dataset ranks after first averaging folds~1--5; lower ranks are better. SE is the standard error over the eleven dataset-level CRPSS means.}\label{tab:robustness-ranks}
  \scriptsize
  \begin{tabular}{lrrrr}
    \toprule
    Model & CRPS rank & CRPSS $\pm$ SE & rel-CRPS & $|\text{cE}|$@95 rank \\
    \midrule
    DiffGBM-score-flex & \textbf{1.91} & \textbf{0.725} $\pm$ 0.076 & \textbf{1.106} & 4.73 \\
    DiffGBM-FM & 3.45 & 0.707 $\pm$ 0.082 & 1.334 & 3.18 \\
    Treeffuser-published & 4.18 & 0.699 $\pm$ 0.081 & 1.248 & 3.55 \\
    QReg-LightGBM & 4.36 & 0.694 $\pm$ 0.076 & 1.826 & 5.36 \\
    Deep ensemble & 5.45 & 0.685 $\pm$ 0.080 & 1.556 & \textbf{2.73} \\
    iBUG & 5.91 & 0.679 $\pm$ 0.080 & 1.564 & 5.82 \\
    CatBoost-unc. & 6.09 & 0.666 $\pm$ 0.085 & 1.815 & 6.73 \\
    CARD-style diffusion & 6.73 & 0.662 $\pm$ 0.088 & 1.746 & 5.55 \\
    NGBoost & 6.91 & 0.626 $\pm$ 0.089 & 2.471 & 7.36 \\
    \bottomrule
  \end{tabular}
\end{table}

\subsection{Mechanism Ablation}\label{sec:mechanism-ablation}
Table~\ref{tab:mechanism-ablation} isolates the main design choices under the same fold-0 tuning protocol and one shared LightGBM surface with histogram resolution held fixed and identical across rows, so each row varies only its method-defining choice at matched capacity. Three conclusions follow. First, changing only the published noise-prediction score model from Euler SDE sampling to a 25-step Heun probability-flow ODE is not a free speedup: CRPSS falls from 0.670 to 0.629. Second, at this fixed resolution EDM input/target preconditioning is the largest score-side jump, improving CRPSS from 0.646 after residualization alone to 0.680 and cutting 95\% coverage error from 0.027 to 0.016; the flex study confirms the EDM direction under joint binning tuning, where it is selected on ten of the eleven datasets. The log-$\sigma$ feature is roughly neutral under uniform $t$, while the full score$^+$ bundle is the most calibrated score-side row. Third, few-step FM needs residualization: VP-FM without residualization is the weakest rel-CRPS row despite one small-dataset win. Among residualized FM paths, tuned linear has the best aggregate CRPSS, trig is close and wins two datasets, and VP is the fastest. No single path dominates: feature scale, velocity target shape, residualization, and ODE discretization all affect the supervised problem solved by LightGBM.

\begin{table*}[!t]
  \centering
  \caption{Tuned mechanism ablation over the ten non-CT datasets. Each row is tuned on one shared LightGBM surface with histogram resolution (\texttt{max\_bin}) held fixed and identical across rows, so that only the method-defining choice---objective, residualization, probability path, or bound sampler---varies. This is a deliberate ceteris-paribus isolation of each mechanism at matched capacity, complementary to the headline rows, which instead tune histogram resolution jointly for best performance (Table~\ref{tab:headline}). The mechanism \emph{directions} reported here are confirmed to survive joint binning tuning by the score-flex study (e.g.\ the EDM recipe is selected on ten of the eleven datasets); the fixed-resolution magnitudes are read as directional evidence, not capacity-matched effect sizes. rel-CRPS is normalized by the best ablation row on each dataset; the ablation does not include \texttt{ct\_slices}.}\label{tab:mechanism-ablation}
  \scriptsize
  \begin{tabular}{lrrrrrrr}
    \toprule
    Variant & CRPSS $\uparrow$ & rel-CRPS $\downarrow$ & $q$-MACE $\downarrow$ & $|\text{cE}|$@90 $\downarrow$ & $|\text{cE}|$@95 $\downarrow$ & samp.\ time & raw wins \\
    \midrule
    Score noise, Euler-50 & 0.670 & 1.150 & 0.044 & 0.044 & 0.023 & 217.30\,s & 2 \\
    Score noise, Heun-25 & 0.629 & 1.424 & 0.058 & 0.049 & 0.025 & 190.92\,s & 0 \\
    + residualizer & 0.646 & 1.104 & 0.041 & 0.044 & 0.027 & 47.58\,s & 0 \\
    + EDM input/target & 0.680 & 1.042 & 0.034 & 0.025 & 0.016 & 109.46\,s & 0 \\
    + log-$\sigma$ feature & 0.679 & 1.042 & 0.035 & 0.027 & 0.017 & 73.10\,s & 0 \\
    Score$^+$ full & 0.681 & 1.045 & \textbf{0.028} & \textbf{0.020} & \textbf{0.015} & 35.11\,s & 0 \\
    FM linear + residualizer & \textbf{0.684} & \textbf{1.027} & \textbf{0.028} & 0.024 & 0.020 & 19.28\,s & \textbf{4} \\
    FM trig + residualizer & 0.681 & \textbf{1.027} & \textbf{0.028} & 0.026 & 0.021 & 23.87\,s & 2 \\
    FM VP, no residualizer & 0.631 & 1.985 & 0.050 & 0.051 & 0.032 & 23.19\,s & 1 \\
    FM VP + residualizer & 0.680 & 1.034 & 0.034 & 0.028 & 0.024 & \textbf{14.69\,s} & 1 \\
    \bottomrule
  \end{tabular}
\end{table*}

\subsection{Sampler Cost Tradeoff}\label{sec:sampler-cost}
Figure~\ref{fig:sampler-cost} re-evaluates tuned frozen-bundle configurations---published, score$^+$, and FM---across sampler step counts without retuning hyperparameters. As in the mechanism ablation, these configurations come from the earlier fixed-resolution tuning round, so the rows are internally comparable but not identical to the headline configurations of Table~\ref{tab:headline}. The practical pattern is sharp. FM-ODE is effectively converged by 3--5 Heun steps: VP-FM moves from rel-CRPS 1.046 at 3 steps to 1.037 at 5 steps, and residualized linear FM is already at 1.031 at 3 steps. Score$^+$ needs more integration: 5-step PF-ODE is not competitive (rel-CRPS 1.781), while 15--25 steps recover the calibrated score row. The published Euler SDE improves gradually with step count but remains much slower at comparable aggregate CRPS. Finally, the 25-step FM-SDE points confirm the calibration tradeoff: for VP-FM, stochasticity improves 95\% coverage error from 0.024 to 0.018, but worsens $q$-MACE and 90\% coverage error while paying 25 SDE steps. The same tradeoff is clearest in the highest predicted-IQR bin: Appendix~\ref{app:tail-diagnostic} shows VP-FM 95\% coverage improving from 0.906 with the 5-step ODE to 0.944 with the 25-step SDE, and linear FM from 0.902 to 0.935. Thus the default recommendation remains few-step FM-ODE for aggregate CRPS/latency, with stochasticity reserved for conditional-calibration targets.

\begin{figure*}[!t]
  \centering
  \includegraphics[width=\textwidth]{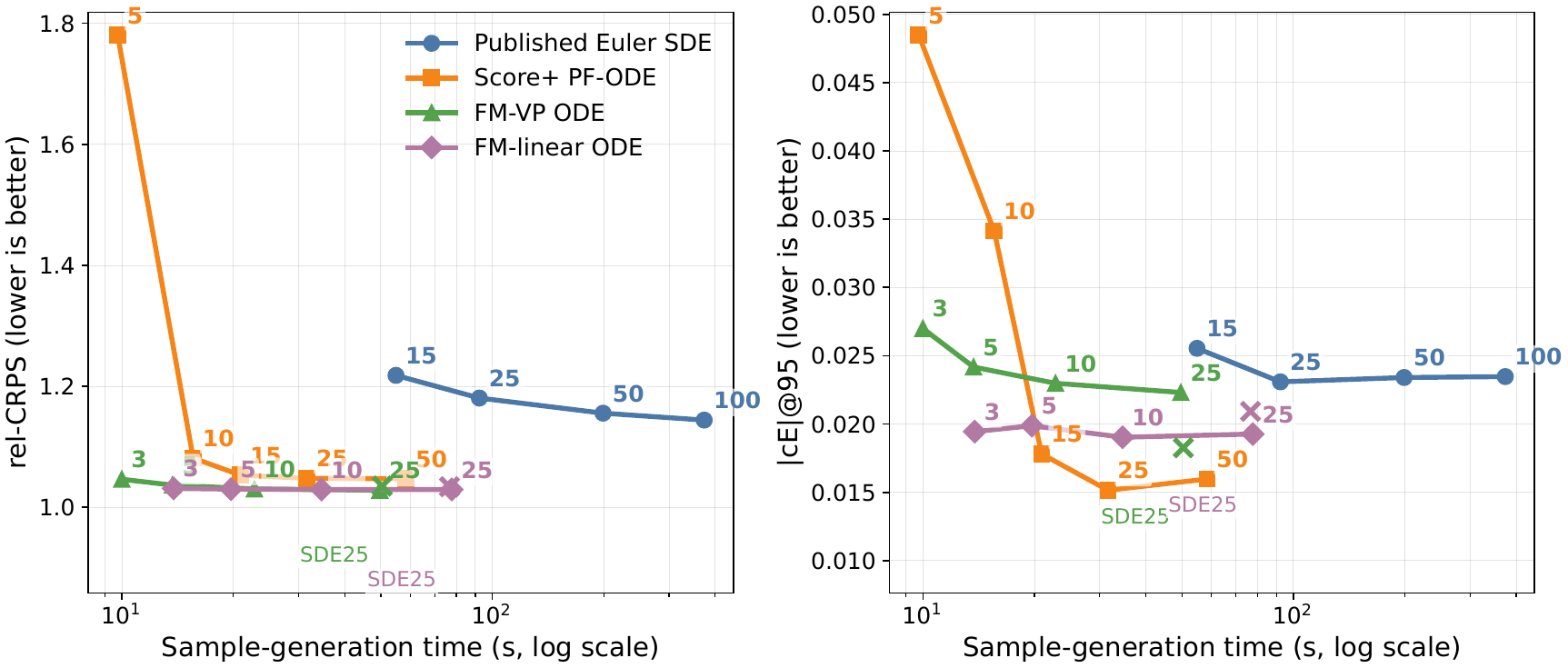}
  \caption{Sampler cost tradeoff after tuning, averaged over the same ten non-CT datasets and folds~1--5. Points are eval-only sampler changes from the tuned configurations; labels show solver steps, and \texttt{SDE25} marks the 25-step stochastic FM sampler. rel-CRPS is normalized by the best sampler-cost row on each dataset.}\label{fig:sampler-cost}
\end{figure*}

\paragraph{Design diagnostics.} The fixed path diagnostic in Appendix~\ref{app:path} isolates the feature-scale advantage of variance-preserving paths under matched residualizer-C and a 5-step Heun ODE; the tuned ablation of Table~\ref{tab:mechanism-ablation} supplies the performance evidence. Stochastic samplers do not Pareto-dominate $\varepsilon=0$ on aggregate CRPS, coverage, or latency, though they can improve high-IQR tail coverage (Appendix~\ref{app:tail-diagnostic}), and transferred score-side tricks are neutral or worse. We therefore recommend the deterministic ODE for aggregate CRPS and latency, and reserve stochasticity as a conditional-calibration knob. Appendix~\ref{app:score-side} separates the tuned headline claims from the fixed diagnostics.

\section{Discussion}\label{sec:disc}
Two findings deserve emphasis. First, the headline rows compare tuned recipes on a shared LightGBM surface at an equalized 40-trial budget, while Appendix~\ref{app:score-side} scopes the fixed diagnostics; because score-flex, FM, and the published baseline share that surface and budget, and score-flex and published additionally share a sampler, the score-flex win is not absorbing a tuning-budget or sampler advantage. Second, per-dataset tuning turns the headline into a clean tradeoff: score-flex is the accuracy row---best aggregate CRPS skill and normalized CRPS, and the winner as data scales---whereas FM is the cheapest sampler and the best-calibrated DiffGBM row. The common mechanism is regression conditioning for tree ensembles: EDM preconditioning stabilises the VE score path's inputs and denoising residual, while FM changes the target and sampler geometry of the supervised problem. The tuned path ablation is important here: variance preservation is one helpful conditioning axis, but not the whole story. Linear FM has a less uniform input scale, yet its simple displacement-like velocity target and direct few-step ODE can be easier to exploit once conditional location is residualized; trig shows that the variance-preserving family remains competitive, but does not uniquely determine the CRPS optimum. Log-$\sigma$ sampling controls where tree capacity is spent, the log-$\sigma$ feature exposes the noise regime to histogram splits, residualization removes predictable conditional location, and Heun ODE sampling is an inference-side efficiency gain. The large-dataset inversion argues against a blanket ``SDE is worse'' interpretation: per-IQR-bin diagnostics show that stochasticity can improve high-uncertainty tail coverage, while aggregate metrics penalise broader intervals, worse CRPS, and higher latency.

\paragraph{Limitations.} All displayed families are tuned per dataset, but this is still finite-budget model selection with unequal search spaces and per-trial costs. The trial budget is equalized within the diffusion families (40 trials each for published, score-flex, and FM, with score-flex and published additionally sharing a sampler), but the six external baselines are tuned for 25 trials each; the cross-family comparison is therefore an equal-within-family finite-trial budget rather than a trial-count- or wall-clock-normalized comparison across families. We include six probabilistic baselines but not distributional random forests~\citep{cevid2022drf}; per-dataset gaps, especially on large datasets, should be read as descriptive rather than high-powered significance estimates. The benchmark suite is also limited to scalar UCI-style tabular regression; categorical-heavy tables, multi-output responses, and operational tail-risk settings remain outside the evidence in this paper. The \texttt{ct\_slices} folds are random over slices rather than grouped by patient (Appendix~\ref{app:datasets}), so its numbers reflect within-patient interpolation; all rows share those folds, so the comparison is internally matched, but rankings under a patient-grouped split are untested. In the FM row the residualizer is selected from fixed diagnostics rather than automated jointly with the flow objective, and it is a partial bottleneck on the largest datasets; the score-flex row instead tunes residualization jointly with the rest of the recipe. The large-data analysis in Appendix~\ref{app:large-data} shows that neither the residualizer nor the sampler alone explains the frozen bundles' behaviour on \texttt{ct\_slices}, and that the jointly tuned score-flex recipe resolves it, beating the published SDE and the deep ensemble there; on that uniquely near-deterministic dataset the selected recipe is noise prediction with finer histogram bins rather than EDM, the reverse of the other ten datasets, so no single recipe axis should be read as universal. On multi-output responses, the implementation fits one LightGBM score or velocity regressor per response coordinate, but each regressor sees the full noisy response vector, so cross-output dependence can in principle be represented through the learned vector field. How well this works with diagonal Gaussian noise paths, per-coordinate tree targets, and finite-step samplers is outside the evidence here and left to future work.

\paragraph{Choosing a variant.} The per-dataset pattern supports two clear operating points. Use \textsc{DiffGBM-score-flex} when aggregate accuracy is the target: its SDE arm has the best CRPS skill of any row, beats the published baseline on every dataset, and dominates as data grows, at the cost of the stochastic sampler's higher latency and looser central coverage. Use \textsc{DiffGBM-FM} when sample latency or interval calibration is the binding constraint; it is the cheapest sampler and the best-calibrated DiffGBM row, most attractive on small and medium data, while the mechanism ablation shows residualized linear FM as the tuned CRPS corner and trig as a close path check. When tight PIT uniformity or interval coverage is the reported quantity and a deterministic sampler is acceptable, the frozen score$^+$ PF-ODE bundle (Appendix~\ref{app:competitors}) and, on the largest near-deterministic datasets, the separately tuned PF-ODE arm of the score-flex space give the strongest coverage. Stochasticity in the FM sampler ($\varepsilon>0$) is a conditional knob, not a default: select it only when high-uncertainty tail coverage is the target (Appendix~\ref{app:tail-diagnostic}), against conditional-calibration metrics rather than aggregate CRPS.

\paragraph{Future work.} Useful extensions include wall-clock-normalized tuning, calibrating both DiffGBM variants and quantile-regression LightGBM under a matched calibration split~\citep{romano2019cqr} and reporting the resulting coverage--width tradeoff, selection criteria for stochastic samplers when conditional coverage matters, categorical-heavy stress tests, multi-output regression, and automatic residualizer selection. A conditional adaptation of Forest-Diffusion~\citep{jolicoeurmartineau2024forest} would also make a natural head-to-head comparison on the regression protocol used here; its released models target joint generation and imputation, so we do not report it as a baseline.

\begin{ack}
This work builds directly on Treeffuser~\citep{treeffuser}, and the DiffGBM implementation inherits its gradient-boosted-tree backbone, SDE module, and feature pipeline under the MIT license. I thank its authors for releasing that code. The released package is available as \texttt{diffgbm} on PyPI; see Appendix~\ref{app:repro}.
\end{ack}

\bibliography{refs}

\appendix

\section{Theoretical Derivations and Design Consequences}\label{app:theory}

\begin{lemma}[Wronskian velocity-to-score identity]\label{lem:wronskian-score}
For a differentiable Gaussian path $y_t=\alpha(t)y_0+\beta(t)z$ with conditional velocity $u_t=\alpha'(t)y_0+\beta'(t)z$, nonzero Wronskian $W(t)=\alpha(t)\beta'(t)-\alpha'(t)\beta(t)$, and $\beta(t)\neq0$ (away from the data endpoint), the population velocity $v(y_t,x,t)=\E[u_t\mid y_t,x,t]$ implies the conditional score
\[
\nabla \log p_t(y_t\mid x)=\frac{\alpha'(t)y_t-\alpha(t)v(y_t,x,t)}{W(t)\beta(t)}.
\]
\end{lemma}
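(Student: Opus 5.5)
The plan is to express both the population velocity and the conditional score through the single posterior mean $\E[y_0\mid y_t,x,t]$, and then eliminate that quantity.

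\emph{Step 1: the velocity as a posterior mean.} Solve the path equation for the noise, $z=(y_t-\alpha(t)y_0)/\beta(t)$, which is legitimate because $\beta(t)\neq0$. Substituting into $u_t$ gives
\[
u_t=\frac{\beta'(t)}{\beta(t)}\,y_t-\frac{W(t)}{\beta(t)}\,y_0 .
\]
Conditioning on $(y_t,x,t)$, the first term is fixed, so
\[
v=\frac{\beta'}{\beta}\,y_t-\frac{W}{\beta}\,\E[y_0\mid y_t,x,t].
\]
Since $W(t)\neq0$, this inverts to
\[
\E[y_0\mid y_t,x]=\frac{\beta' y_t-\beta v}{W}.
\]

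\emph{Step 2: the score as a posterior mean (Tweedie's formula).} Given $x$, the marginal is $p_t(y_t\mid x)=\int \Norm(y_t;\alpha y_0,\beta^2 I)\,p(y_0\mid x)\,\dd y_0$. Differentiate under the integral, which is justified by the Gaussian kernel whenever $p(y_0\mid x)$ has, say, a finite first moment. The kernel's gradient is $-(y_t-\alpha y_0)/\beta^2$, so
\[
\nabla\log p_t(y_t\mid x)=-\frac{y_t-\alpha\,\E[y_0\mid y_t,x]}{\beta^2}.
\]
This is the one step carrying analytic content: the interchange of derivative and integral, and the identification of the posterior mean. I expect it to be the main obstacle, though it is standard.

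\emph{Step 3: eliminate the posterior mean.} Substitute the expression from Step 1 into Step 2 and expand $W=\alpha\beta'-\alpha'\beta$. The numerator is
\[
-y_t W+\alpha\beta' y_t-\alpha\beta v=\beta\,(\alpha' y_t-\alpha v).
\]
Hence the score equals $(\alpha' y_t-\alpha v)/(W\beta)$, as claimed.

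\emph{Consistency check.} For the linear path, $W=1$ and the formula reduces to $-(y_t+(1-t)v)/t$, matching the special case stated in \S\ref{sec:fm}.
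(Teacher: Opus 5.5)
Your proof is correct and is essentially the paper's argument: a linear elimination between the path and velocity equations, followed by the denoising (Fisher/Tweedie) form of the score. The only difference is the pivot. The paper solves for $z=(\alpha u_t-\alpha' y_t)/W$ and substitutes directly into $\nabla\log p_t=\E[-z/\beta\mid y_t,x,t]$, whereas you go through $\E[y_0\mid y_t,x]$, which costs one extra inversion but is equivalent since $z=(y_t-\alpha y_0)/\beta$.
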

\emph{Proof.} Solving the two linear equations for $z$ gives
\[
z=\frac{\alpha(t)u_t-\alpha'(t)y_t}{\alpha(t)\beta'(t)-\alpha'(t)\beta(t)}
  =\frac{\alpha(t)u_t-\alpha'(t)y_t}{W(t)}.
\]
Because $p(y_t\mid y_0,x)=\Norm(\alpha y_0,\beta^2 I)$, the conditional score is $-z/\beta(t)$. Fisher's identity gives $\nabla\log p_t(y_t\mid x)=\E[-z/\beta(t)\mid y_t,x,t]$. Substituting the expression for $z$ and using that $y_t$ is fixed under the conditional expectation yields the claim. Replacing $v$ by $v_\theta$ gives Eq.~\eqref{eq:score}. \hfill$\square$

The identity is not new theory: it specializes the standard velocity/score relations for Gaussian probability paths and stochastic interpolants~\citep{lipman2023flow,albergo2023stochastic} to the Wronskian form we use, and we present it only as the bridge that lets a velocity-trained tree model drive score-based and stochastic-interpolant samplers without retraining.

At a fixed $t$ with $\alpha(t)\neq0$, the same identity inverts to $v(y_t,x,t)=\alpha'(t)y_t/\alpha(t)-W(t)\beta(t)\nabla\log p_t(y_t\mid x)/\alpha(t)$. Thus score and velocity have the same conditional structure as functions of $y_t$ at that time; the practical differences studied here come from path-dependent scale variation across $t$, feature geometry, and sampler dynamics rather than an intrinsic fixed-$t$ smoothness advantage.

\paragraph{Path-induced feature scale.} If $y_0$ and $z$ are independent with unit variance after standardization, then the path feature satisfies
\[
\Var(y_t)=\alpha(t)^2+\beta(t)^2.
\]
Thus the linear path has $\Var(y_t)=(1-t)^2+t^2$ and compresses to variance $1/2$ at $t=1/2$, whereas the trigonometric and VP paths keep $\Var(y_t)=1$ for all $t$. This identity is elementary; its role here is to connect path choice to the empirical distribution from which histogram split candidates are built.

\begin{proposition}[Pooled histogram-bin proxy]\label{prop:histogram-bins}
Fix a scalar noised-response feature and a finite partition $\mathcal{P}=\{I_1,\ldots,I_B\}$. Let $\mathcal{H}(\mathcal{P})$ be the functions that are constant on each $I_b$, and let
\[
A_t(\mathcal{P})
=
\inf_{g\in\mathcal{H}(\mathcal{P})}
  \E_{p_t}\!\left[(f_t(y_t)-g(y_t))^2\right],
\qquad
f_t(y)=\E[T_t\mid y_t=y,x,t],
\]
where $T_t$ is the scalar score, denoising, noise, or velocity target at time $t$ with covariate value $x$ fixed. Then
\[
A_t(\mathcal{P})
\leq
\sum_{b=1}^B p_t(I_b)\,
  \operatorname{osc}_{I_b}(f_t)^2,
\]
where $\operatorname{osc}_{I_b}(f_t)=\sup_{y,y'\in I_b}|f_t(y)-f_t(y')|$. If $f_t$ is $L_t$-Lipschitz on the bins, then
\[
A_t(\mathcal{P}) \leq L_t^2\sum_{b=1}^B p_t(I_b)|I_b|^2.
\]
Moreover, in a compact-support proxy with smooth positive density $q$ and equal-$q$-mass bins $\mathcal{P}_q$, the high-resolution scaling is
\[
A_t(\mathcal{P}_q)
\lesssim
\frac{L_t^2}{B^2}\int \frac{p_t(y)}{q(y)^2}\,\dd y,
\]
up to lower-order bin-width terms.
\end{proposition}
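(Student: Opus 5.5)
The plan is to bound the infimum by evaluating it at one explicit competitor in $\mathcal{H}(\mathcal{P})$. The natural choice is any bin-wise representative, for instance $g(y)=f_t(c_b)$ for a fixed $c_b\in I_b$ when $y\in I_b$. The bin conditional mean $\E_{p_t}[f_t\mid I_b]$ also works and is in fact the minimizer, but a representative point is enough. Decompose the expectation over the partition:
\[
\E_{p_t}\!\left[(f_t-g)^2\right]=\sum_{b=1}^B \int_{I_b}(f_t(y)-f_t(c_b))^2\,p_t(y)\,\dd y .
\]
On $I_b$ the integrand is at most $\operatorname{osc}_{I_b}(f_t)^2$, so each summand is at most $p_t(I_b)\operatorname{osc}_{I_b}(f_t)^2$. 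Since $A_t(\mathcal{P})$ is an infimum over $\mathcal{H}(\mathcal{P})$, this gives the first inequality. Bins with $p_t(I_b)=0$ contribute nothing, so infinite oscillation there causes no issue if we adopt the convention $0\cdot\infty=0$.

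The Lipschitz bound follows directly. For $y,y'\in I_b$ we have $|f_t(y)-f_t(y')|\le L_t|y-y'|\le L_t|I_b|$, where $|I_b|$ is the length (diameter) of the bin. Hence $\operatorname{osc}_{I_b}(f_t)\le L_t|I_b|$, and substituting into the first bound gives $L_t^2\sum_b p_t(I_b)|I_b|^2$.

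For the high-resolution scaling, assume $q$ is supported on a compact interval $K$, has positive smooth density there, and that $p_t$ is supported in $K$; we take the mass outside $K$ to be zero in the proxy. Equal-$q$-mass bins satisfy $\int_{I_b}q=1/B$. Because $q$ is continuous and positive on $K$, it is bounded below by some $q_{\min}>0$, so $|I_b|\le 1/(Bq_{\min})\to0$. The mean value theorem for integrals then gives $|I_b|=1/(B\,q(\xi_b))$ for some $\xi_b\in I_b$. Uniform continuity of $q$ and $1/q$ on $K$ yields
\[
|I_b|^2=\frac{1}{B^2 q(y)^2}\bigl(1+o(1)\bigr)
\]
uniformly over $y\in I_b$. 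Therefore
\[
\sum_b p_t(I_b)|I_b|^2=\sum_b\int_{I_b}\frac{p_t(y)}{B^2q(y)^2}\bigl(1+o(1)\bigr)\dd y=\frac{1}{B^2}\int\frac{p_t}{q^2}\,\dd y\,\bigl(1+o(1)\bigr),
\]
and the $o(1)$ terms are the stated lower-order bin-width corrections.

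I expect the main obstacle to be this last step: making the $1+o(1)$ replacement uniform, and stating the proxy's support assumptions so that $\int p_t/q^2$ is finite. The fix is to state the result under $\operatorname{supp}p_t\subseteq K$ with $q\ge q_{\min}>0$ on $K$. The modulus of continuity of $q$ then controls the relative error by $\omega_q(1/(Bq_{\min}))/q_{\min}$, which is what "$\lesssim$ up to lower-order terms" means. In the pooled application one sets $q=\bar p$, which recovers the $\int p_t/\bar p^{\,2}$ term cited in \S\ref{sec:conditioning}.
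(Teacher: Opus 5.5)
Your proof is correct. The first two parts match the paper's: a bin-wise competitor taking a value between $\inf_{I_b} f_t$ and $\sup_{I_b} f_t$ gives the oscillation bound, and $\operatorname{osc}_{I_b}(f_t)\le L_t|I_b|$ gives the Lipschitz bound.

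The high-resolution step starts from the same mean-value identity $|I_b|=1/(Bq(\xi_b))$ but finishes differently. The paper also approximates $p_t(I_b)\approx p_t(\xi_b)|I_b|$ at the same tag point. It then reads $\frac{1}{B^2}\sum_b p_t(\xi_b)q(\xi_b)^{-3}B^{-1}$ as a Riemann sum of $p_t/q^3$ under $q\,\dd y$. That route is short and makes the role of $q$ as the bin-allocation density transparent. However, it tacitly needs $p_t$ to be continuous, or at least Riemann integrable against $q$, and it leaves the error in its ``$\approx$'' unquantified.

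You instead keep $p_t(I_b)=\int_{I_b}p_t$ exact and substitute the uniform relative approximation $|I_b|^2=(1+o(1))/(B^2q(y)^2)$ inside the integral. This needs only continuity of $q$ and $q\ge q_{\min}>0$ on $K$. It places no regularity on $p_t$ beyond $\operatorname{supp}p_t\subseteq K$. It also yields an explicit multiplicative error of order $\omega_q(1/(Bq_{\min}))/q_{\min}$, so the paper's heuristic scaling becomes a genuine asymptotic upper bound once combined with your Lipschitz inequality.
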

\emph{Proof.} The best piecewise-constant approximation on a fixed partition cannot be worse than any particular constant choice on each bin. Choosing any value between the infimum and supremum of $f_t$ on $I_b$ gives pointwise squared error at most $\operatorname{osc}_{I_b}(f_t)^2$ on that bin, and integration under $p_t$ gives the first bound. The Lipschitz bound follows from $\operatorname{osc}_{I_b}(f_t)\leq L_t|I_b|$. For the high-resolution proxy, an equal-$q$-mass bin has $|I_b|\approx 1/(B q(\xi_b))$ for some $\xi_b\in I_b$, so
\[
\sum_b p_t(I_b)|I_b|^2
\approx
\sum_b p_t(\xi_b)|I_b|^3
=\frac{1}{B^2}\sum_b \frac{p_t(\xi_b)}{q(\xi_b)^3}\frac{1}{B}
\to \frac{1}{B^2}\int \frac{p_t(y)}{q(y)^2}\,\dd y,
\]
where the last step is a Riemann sum under the measure with density $q$. \hfill$\square$

\paragraph{Shared-bin mismatch.} Proposition~\ref{prop:histogram-bins} is a one-feature proxy for the finite split resolution available to a histogram tree, not a theory of the full boosted LightGBM ensemble. In the benchmark, rows from many noise levels are pooled before tree fitting, so a natural proxy is $q=\bar p$ with
\[
\bar p(y)=\int p(t)\,p_t(y)\,\dd t.
\]
The high-resolution term $\int p_t/\bar p^2$ is large when $\bar p$ allocates little bin density to regions where a particular $p_t$ still places mass. VE paths create exactly this compromise by mixing data-scale low-noise features with expanded high-noise features; a single pooled histogram must spend resolution across both regimes. EDM input preconditioning and VP-FM both reduce this mismatch by keeping the tree feature scale closer to a common range across $t$. Figure~\ref{fig:toy-preconditioning}(b) illustrates the same effect empirically: the published-style VE target is stable, but the unpreconditioned tree input expands across noise levels, whereas EDM and VP-FM keep the feature scale closer to a shared-bin regime. This is a finite-bin conditioning statement, not a claim that the fixed-time VE regression function is intrinsically rough.

\paragraph{Residualization removes predictable location.} Residualization is an exact reparameterization in the infinite-capacity limit: for any fixed $\hat\mu(x)$, modelling $r=y-\hat\mu(x)$ exactly and returning $y=r+\hat\mu(x)$ recovers the same conditional distribution. Its empirical value is finite-model conditioning. In the scalar case, if $\mu(X)=\E[Y\mid X]$ and $R=Y-\mu(X)$, the law of total variance gives
\[
\Var(Y)=\E[\Var(Y\mid X)]+\Var(\mu(X)),
\qquad
\Var(R)=\E[\Var(Y\mid X)].
\]
For vector-valued responses the same statement applies to the trace of the covariance. Thus residualization removes predictable location variation before noised responses are pooled into histogram bins. The large-dataset caveat remains: a fixed residualizer can become a bottleneck once the inner score or velocity model has enough data to represent the conditional mean directly.

\begin{proposition}[Endpoint score-error amplification]\label{prop:endpoint-amp}
Let $v_\theta(y_t,x,t)=v(y_t,x,t)+\delta(y_t,x,t)$ be an approximate velocity with pointwise error $\delta$, and let $\hat s_\theta,s$ denote the implied and true scores from Lemma~\ref{lem:wronskian-score}. Then
\[
\hat s_\theta(y_t,x,t)-s(y_t,x,t)=-\frac{\alpha(t)}{W(t)\,\beta(t)}\,\delta(y_t,x,t),
\]
so the pointwise score-error gain is $|\alpha(t)/(W(t)\beta(t))|$. For the linear and trigonometric paths, $\beta(t)\asymp t$ and $W(t)$ is bounded away from zero near $t=0$, so this gain diverges like $1/t$. For the VP path used here, $\beta(t)\asymp\sqrt{t}$ and $W(t)\asymp1/\sqrt{t}$, so $W(t)\beta(t)$ is bounded away from zero and the velocity-to-score gain remains bounded.
\end{proposition}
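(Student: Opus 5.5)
The plan is to apply Lemma~\ref{lem:wronskian-score} twice and then do an asymptotic check for each path. Both scores come from the same linear map of the velocity. For the model, $\hat s_\theta=(\alpha' y_t-\alpha v_\theta)/(W\beta)$; for the truth, $s=(\alpha' y_t-\alpha v)/(W\beta)$. Subtracting the two, the $\alpha' y_t$ terms cancel because both are evaluated at the same $y_t$. What remains is $-\alpha\,(v_\theta-v)/(W\beta)=-\alpha\delta/(W\beta)$. This holds wherever $W(t)\beta(t)\neq0$, which is the standing hypothesis of the lemma. Taking absolute values gives the pointwise gain $|\alpha/(W\beta)|$.

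The rest is an expansion of $\alpha$, $\beta$ and $W$ near $t=0$ for each path.

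\emph{Linear path.} Here $\alpha=1-t$ and $\beta=t$, so $W=(1-t)\cdot1-(-1)\cdot t=1$. The gain is $(1-t)/t\sim1/t$.

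\emph{Trigonometric path.} Here $W=\tfrac{\pi}{2}(\cos^2+\sin^2)=\pi/2$ and $\beta=\sin(\pi t/2)\asymp t$. The gain is $(2/\pi)\cot(\pi t/2)\sim 4/(\pi^2 t)$.

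In both cases $\alpha(0)=1$ and $W$ is bounded away from zero, so the divergence comes entirely from $\beta\asymp t$.

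\emph{VP path.} Here $\alpha=e^{-T/2}$. I take $\beta=\sqrt{1-\alpha^2}$, the variance-preserving complement, which is the convention needed for the stated rates. Since $T(t)=\tfrac12\beta_{\min}t+O(t^2)$, we get $1-\alpha^2=T+O(T^2)$, so $\beta\asymp\sqrt{t}$ with constant $\sqrt{\beta_{\min}/2}$.

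For the Wronskian I would use the identity $\alpha^2+\beta^2=1$. Differentiating gives $\alpha\alpha'+\beta\beta'=0$. Then
\[
W\beta=\alpha\beta\beta'-\alpha'\beta^2=-\alpha'(\alpha^2+\beta^2)=-\alpha'.
\]
Also $\alpha'=-\tfrac12T'(t)\alpha$, so $W\beta=\tfrac12T'(t)\alpha(t)$. At $t=0$ this equals $\beta_{\min}/4>0$, and it stays positive on $[0,1]$ because $T'(t)=\tfrac12\beta_{\min}+\tfrac12(\beta_{\max}-\beta_{\min})t>0$ and $\alpha>0$. Consequently $W=T'\alpha/(2\beta)\asymp1/\sqrt t$. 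The gain is $\alpha/(W\beta)=2/T'(t)$, which is bounded on $[0,1]$ by $4/\beta_{\min}$.

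The only step needing care is the VP case. The rates $\beta\asymp\sqrt t$ and $W\asymp 1/\sqrt t$ each blow up or vanish individually, so checking them separately does not control the product. The identity $W\beta=-\alpha'$ resolves this directly, because the gain then equals $-\alpha/\alpha'$, which is bounded. I would also state the convention $\beta^2=1-\alpha^2$ explicitly, since the paper defines the VP path only through $\alpha$.
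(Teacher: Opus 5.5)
Your derivation of the identity is the paper's: Eq.~\eqref{eq:score} is linear in the velocity, so subtracting the true score from the implied one cancels the $\alpha'(t)y_t$ term and leaves $-\alpha\delta/(W\beta)$. The paper's proof stops there and only asserts the path rates. Your path-by-path check is therefore an addition rather than a competing route, and it is correct.

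For the linear and trigonometric paths you get $W=1$ and $W=\pi/2$. For VP you take $\alpha=e^{-T/2}$ from the paper's definition and $\beta^2=1-\alpha^2$, the convention implied by its unit-variance remarks for that path. The identity $W\beta=-\alpha'$ then gives the exact gain $2/T'(t)\le 4/\beta_{\min}$ on all of $(0,1]$.

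That identity gives more than the stated rates. For any path with $\alpha^2+\beta^2=1$ the gain is $|\alpha/\alpha'|$. This explains why the trigonometric path diverges ($\alpha'(0)=0$) while VP stays bounded ($\alpha'(0)=-\beta_{\min}/4\neq0$). It also yields a global bound rather than one valid only near $t=0$.

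One correction to your closing remark: two-sided rates do multiply. So $\beta\asymp\sqrt t$ and $W\asymp 1/\sqrt t$ already give $W\beta\asymp 1$ near the endpoint, and the paper's inference is valid as written. What your identity really provides is a clean way to establish $W\asymp 1/\sqrt t$ in the first place and to extend the bound away from $t=0$.
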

\emph{Proof.} Eq.~\eqref{eq:score} is linear in $v$, so
\[
\hat s_\theta - s
=\frac{\alpha'(t)y_t-\alpha(t)(v+\delta)-(\alpha'(t)y_t-\alpha(t)v)}{W(t)\beta(t)}
=-\frac{\alpha(t)\,\delta}{W(t)\beta(t)}. \tag*{$\square$}
\]

\paragraph{Endpoint amplification.} Proposition~\ref{prop:endpoint-amp} therefore predicts that any stochasticity schedule with $\varepsilon(t)\not\to 0$ as $t\to 0$ injects recovered-score error into the SDE drift. In Eq.~\eqref{eq:sde}, the recovered-score contribution to the drift error is
\[
\frac{\varepsilon(t)^2}{2}(\hat s_\theta-s)
=-\frac{\varepsilon(t)^2\alpha(t)}{2W(t)\beta(t)}\,\delta.
\]
More generally, if $\beta(t)\asymp t^a$, $\varepsilon(t)\asymp t^b$, $\alpha(t)\asymp1$, and $W(t)\asymp t^{a-1}$ near the data endpoint, this drift-error contribution scales as $t^{2b+1-2a}\delta$. Linear and trigonometric paths have $a=1$, so constant stochasticity diverges, $\varepsilon(t)\asymp\sqrt{t}$ leaves a non-vanishing endpoint error scale, and $\varepsilon(t)=ct$ makes the recovered-score contribution vanish. The VP path used here has $a=1/2$, so the velocity-to-score gain is milder, but vanishing stochasticity still suppresses the endpoint contribution and $\varepsilon\equiv0$ removes it entirely. This is why $\varepsilon(t)=ct$ and the deterministic corner are the two natural schedules in \S\ref{sec:fm}.

\paragraph{VE population identity.} On a VE path $y_t=y_0+\beta(t)z$ with $\beta'(t)>0$, normalized velocity, noise prediction, and denoising score recovery imply the same population score:
\[
s_t(y_t,x)=-\frac{\E[z\mid y_t,x,t]}{\beta(t)}
          =\frac{\E[y_0\mid y_t,x,t]-y_t}{\beta(t)^2}.
\]
Indeed, for VE, $\alpha=1$, $\alpha'=0$, $u_t=\beta'(t)z$, and $W=\beta'(t)$; Lemma~\ref{lem:wronskian-score} gives $s_t=-\E[z\mid y_t,x,t]/\beta(t)$, and $y_t=y_0+\beta(t)z$ gives the denoising form. This identity is not a claim that the finite LightGBM problems are identical. EDM fits the preconditioned denoising residual $F=(y_0-c_{\mathrm{skip}}y_t)/c_{\mathrm{out}}$ and reconstructs $D_\theta=c_{\mathrm{skip}}y_t+c_{\mathrm{out}}F_\theta$, whereas normalized VE-FM would fit $z$ directly. They imply the same score at the optimum but present different targets and feature scalings to LightGBM. This is the useful interpretation of the score$^+$--FM tie: both improve the conditioning of the regression problem seen by the tree ensemble, EDM by preconditioning the VE score objective and VP-FM by choosing a better-conditioned probability path.

\begin{table}[!h]
  \centering
  \caption{Regression-conditioning view of the main design choices. Here $y_0$ and $z$ are standardized. ``Conditioning issue'' refers to the supervised problem fit by LightGBM, not to the population score identity.}\label{tab:conditioning}
  \scriptsize
  \begin{tabular}{p{0.14\textwidth}p{0.18\textwidth}p{0.19\textwidth}p{0.29\textwidth}}
    \toprule
    Variant & Regressed object & Feature scale & Conditioning issue / fix \\
    \midrule
    VE noise prediction & $-z$ & $y_0+\sigma z$ grows with $\sigma$ & Unit target, but heteroscedastic input and $1/\sigma$ score amplification. \\
    VE EDM score$^+$ & $(y_0-c_{\mathrm{skip}}y_t)/c_{\mathrm{out}}$ & $c_{\mathrm{in}}y_t$ is near unit scale & Input and denoising residual are explicitly preconditioned. \\
    Raw VE-FM & $\beta'(t)z$ & $y_0+\beta z$ grows with $\beta$ & Velocity target inherits the schedule derivative. \\
    Normalized VE-FM & $z$ & $y_0+\beta z$ unless also scaled & Population-equivalent to noise prediction after known rescaling. \\
    VP-FM & $\alpha' y_0+\beta' z$ & $\alpha^2+\beta^2\approx1$ & Path keeps tree features near the standardized data scale. \\
    \bottomrule
  \end{tabular}
\end{table}

\begin{figure}[!h]
  \centering
  \includegraphics[width=\textwidth]{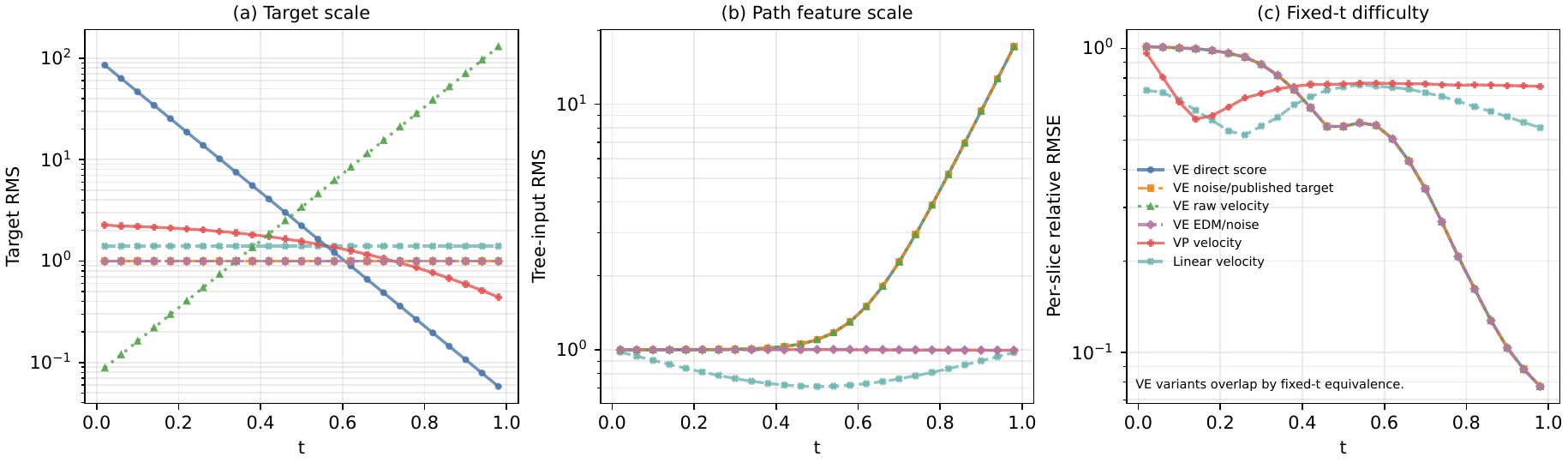}
  \caption{Toy preconditioning diagnostic on a one-dimensional heteroscedastic mixture, generated by \texttt{pixi run toy-geometry}. Panel (a) shows that direct VE score prediction and raw VE-FM have target scales that vary by orders of magnitude across $t$, whereas the published-style VE noise target, EDM/noise prediction, and linear velocity have a target scale that is constant in $t$ (unit scale for the noise targets; $\sqrt{2}$ for the linear velocity $z-y_0$). Panel (b) separates target scaling from input geometry: the published-style VE target is stable but the tree input $y_t=y_0+\sigma z$ still expands strongly, while EDM rescales the input and VP-FM keeps the path feature near standardized scale. Panel (c) fits a fresh LightGBM at each fixed $t$; the VE curves overlap up to numerical noise, confirming that the score/velocity distinction is not intrinsic fixed-$t$ smoothness but cross-$t$ conditioning.}\label{fig:toy-preconditioning}
\end{figure}

\section{Datasets and Protocol Details}\label{app:datasets}

\paragraph{Sources.} Ten of the eleven benchmarks follow the canonical Treeffuser protocol~\citep{treeffuser}: \texttt{yacht}, \texttt{concrete}, \texttt{energy}, \texttt{wine}, \texttt{kin8nm}, \texttt{naval}, \texttt{power\_plant} and \texttt{protein} from the UCI repository, \texttt{california\_housing} from scikit-learn's \texttt{fetch\_california\_housing}, and \texttt{diabetes} from scikit-learn's \texttt{load\_diabetes}. The eleventh, \texttt{ct\_slices}~\citep{graf2011ctslices} (relative location of CT slices on the axial axis; UCI repository, retrieved via OpenML data id 46300; 53{,}500 rows, 384 numeric features), anchors the large-data end of the suite and is analyzed in Appendix~\ref{app:large-data}. Its rows are slices from 74 patients; we drop the patient identifier and assign slices to folds at random, so \texttt{ct\_slices} measures within-patient interpolation rather than generalization to unseen patients. The headline results use the six-fold tuning/evaluation protocol from \S\ref{sec:exp}; Table~\ref{tab:per-dataset} reports the post-split training rows for each evaluation fold. Fixed-configuration appendix diagnostics instead use the train/test sizes declared in their YAML configs, sometimes with large datasets subsampled for sweep cost. The only preprocessing is StandardScaler standardisation of features and target, fit on the training split and applied to the corresponding test split. \texttt{wine} is the union of \texttt{wine-quality-red} and \texttt{wine-quality-white} with a binary \texttt{color} indicator. Although the implementation can use LightGBM's native categorical handling, missing-value routing, and constraint options, the paper experiments do not isolate these features or use categorical-heavy benchmark tables.

\paragraph{Seed policy.} The fold-tuned headline uses the deterministic six-fold manifest with master seed 0. Fixed diagnostic sweeps use the seed lists declared in their configs (usually 0--2 or 0--9), with three independent streams: \texttt{data\_seed} (offset 0) for the train/test split; \texttt{model\_seed} (offset $10{,}000$) for the LightGBM regressors; \texttt{sampler\_seed} (offset $20{,}000$) for stochastic samplers. Variance reduction across seeds therefore reflects only sampling-and-split variability, not boosting initialisation.

\paragraph{Common training settings.} In fixed diagnostic sweeps, unless noted, DiffGBM variants use $n_{\text{estimators}}=3000$, early stopping after 50 rounds, learning rate $0.1$, $n_{\text{repeats}}=30$ (each training point is reused with 30 fresh $t$ samples), and the per-output-dimension LightGBM fitter. In the headline protocol, these and other LightGBM hyperparameters are selected per dataset from the tuning spaces and stored under \texttt{benchmarks/results/tuning/best\_params/}. Inference draws 200 samples per test point.

\paragraph{Tuning budget.} For each dataset, the DiffGBM and published Treeffuser families are tuned for 40 finite Optuna trials on the fold-0 train/validation split---equalized across the published, score-flex, and FM rows so that no diffusion row gets a search advantage over another---with at most twice as many total attempts to absorb failed or non-finite trials. The headline score-flex row is the Euler-SDE arm, tuned in one 40-trial study under the published baseline's sampler; the PF-ODE twin is a separate 40-trial study reported as an additional operating point (Appendix~\ref{app:large-data}). Two of the forty score-flex trials are seeded at the published and score$^+$ corners; the published and FM studies use no seed trials. The tuning objective is validation CRPS computed from 100 predictive Monte Carlo samples per validation point; this is the number of generated response samples used to estimate CRPS, not a subsample of validation observations. Final evaluation refits the selected configuration on each held-out evaluation fold and draws 200 samples per test point. The DiffGBM/Treeffuser rows share the same seven-dimensional LightGBM surface---\texttt{n\_estimators}, \texttt{learning\_rate}, \texttt{num\_leaves}, \texttt{max\_depth}, \texttt{min\_child\_samples}, \texttt{subsample}, and \texttt{max\_bin}---with published and FM fixing their method-defining choices by row and score-flex additionally searching the score-side recipe axes of Table~\ref{tab:tuning-spaces}. The six external baselines are tuned for 25 finite trials each; because their search spaces, per-trial costs, and trial counts differ from the diffusion families, the cross-family comparison should be read as an equal-finite-trial-within-family budget rather than a trial-count- or wall-clock-normalized comparison across families (see Limitations). The persisted YAML files record \texttt{n\_trials\_target}, \texttt{n\_trials\_finite}, and \texttt{n\_trials\_failed}.

\begin{table}[!h]
  \centering
  \caption{Tunable hyperparameter ranges per model family (40 Optuna trials for the DiffGBM/Treeffuser families, 25 for the external baselines). Log = log-uniform sampling. Fixed method-defining parameters are not listed.}\label{tab:tuning-spaces}
  \scriptsize
  \begin{tabular}{llll}
    \toprule
    Family & Tunable parameter & Range & Scale \\
    \midrule
    Shared LGBM surface     & \texttt{n\_estimators}     & 200--3000 & log \\
    (all DiffGBM/Treeffuser & \texttt{learning\_rate}    & 0.001--0.3 & log \\
    rows)                   & \texttt{num\_leaves}       & 15--255 & linear \\
                            & \texttt{max\_depth}        & $\{-1, 4, 6, 8, 10\}$ & categorical \\
                            & \texttt{min\_child\_samples} & 5--100 & linear \\
                            & \texttt{subsample}         & 0.5--1.0 & linear \\
                            & \texttt{max\_bin}          & $\{255, 1023, 4095\}$ & categorical \\
    \midrule
    DiffGBM-score-flex      & \texttt{score\_parameterization} & $\{$noise, edm$\}$ & categorical \\
    (adds, on top of        & \texttt{noise\_features}   & $\{$raw\_time, raw\_time\_log\_std$\}$ & categorical \\
    the shared surface;      & \texttt{t\_sampling}       & $\{$uniform, log\_sigma\_normal$\}$ & categorical \\
    headline row uses the   & \texttt{log\_sigma} prior $\mu$ & $-3$--$0$ & linear \\
                            & \texttt{log\_sigma} prior $\sigma$ & $0.6$--$2.0$ & linear \\
    Euler-SDE sampler)      & \texttt{loss\_weighting}   & $\{$uniform, min\_snr$\}$ & categorical \\
                            & \texttt{min\_snr\_gamma}   & 1--5 & linear \\
                            & \texttt{residualize}       & $\{$off, mean$\}$ ($+$capacity) & categorical \\
    \midrule
    NGBoost     & \texttt{n\_estimators}     & 200--3000 & log \\
                & \texttt{learning\_rate}    & 0.001--0.3 & log \\
    \midrule
    iBUG        & \texttt{k}                 & 10--200 & log \\
                & \texttt{n\_estimators}     & 200--2000 & log \\
                & \texttt{learning\_rate}    & 0.001--0.3 & log \\
                & \texttt{max\_depth}        & 3--10 & linear \\
    \midrule
    QReg-LightGBM & \texttt{quantile\_count} & 24--49 & linear \\
                   & \texttt{n\_estimators}   & 50--500 & log \\
                   & \texttt{learning\_rate}  & 0.02--0.2 & log \\
                   & \texttt{num\_leaves}     & 15--127 & linear \\
    \midrule
    CatBoost    & \texttt{iterations}        & 200--3000 & log \\
                & \texttt{learning\_rate}    & 0.001--0.3 & log \\
                & \texttt{depth}             & 4--10 & linear \\
    \midrule
    Deep ensemble & \texttt{hidden\_size}    & $\{64, 128, 256, 512\}$ & categorical \\
                  & \texttt{n\_layers}       & 2--5 & linear \\
                  & \texttt{learning\_rate}  & 0.0001--0.01 & log \\
                  & \texttt{max\_epochs}     & $\{100, 200, 400\}$ & categorical \\
    \midrule
    CARD-style  & \texttt{hidden\_size}      & $\{64, 128, 256\}$ & categorical \\
                & \texttt{n\_layers}         & 2--5 & linear \\
                & \texttt{learning\_rate}    & 0.0001--0.01 & log \\
                & \texttt{max\_epochs}       & $\{200, 400\}$ & categorical \\
                & \texttt{diffusion\_epochs} & $\{200, 400\}$ & categorical \\
                & \texttt{n\_steps}          & $\{50, 100, 200\}$ & categorical \\
    \bottomrule
  \end{tabular}
\end{table}

\paragraph{Appendix calibration metrics.} Appendix tables use $q$-MACE for sample-rank calibration: for each held-out response, we compute the fraction of predictive samples above the observed value, sort these ranks, and average their absolute deviation from an evenly spaced uniform grid. Lower $q$-MACE therefore means the sample predictive distributions have ranks closer to uniform, analogous to the PIT-KS metric in the main table but reported as an error magnitude.

\paragraph{Sampler details.}
\textsc{Treeffuser-published} uses the VESDE reverse-time SDE with the Euler--Maruyama sampler at $n_{\text{steps}}=50$, matching the published code path. \textsc{Treeffuser-score$^+$} uses the Heun probability-flow ODE at $n_{\text{steps}}=25$ steps under the EDM-style parametrization with $\sigma_{\min}=0.01$, $\sigma_{\max}=20$. \textsc{DiffGBM-FM} uses the Heun ODE at $n_{\text{steps}}=5$ on the VP path with $\beta_{\min}=0.1$, $\beta_{\max}=20$.

\section{Claim Scope and Diagnostic Cross-References}\label{app:score-side}

Unless otherwise stated, appendix sweeps are fixed-configuration diagnostics used to isolate modelling choices; they are not per-dataset tuned headline comparisons. The experimental pipeline is: fixed diagnostics identify plausible design choices; those choices become the tunable recipe axes of the headline \textsc{DiffGBM-score-flex} space and define the frozen \textsc{published}, \textsc{score$^+$}, and \textsc{FM} reference families; the headline space and the frozen families are then tuned per dataset under the fold-0 protocol at an equalized 40-trial budget, with the headline score-flex row fixed to the published baseline's Euler-SDE sampler; and only that tuned protocol supports the headline aggregate performance claims.

\begin{table}[!h]
  \centering
  \caption{Scope of the empirical claims. ``Tuned'' means per-dataset Optuna selection on fold~0 before evaluation on folds~1--5; ``fixed'' means shared diagnostic settings declared in the corresponding YAML configs.}\label{tab:claim-map}
  \scriptsize
  \begin{tabular}{@{}p{0.22\textwidth}p{0.24\textwidth}p{0.16\textwidth}p{0.27\textwidth}@{}}
    \toprule
    Claim & Evidence & Tuning status & Role in the paper \\
    \midrule
    \textsc{DiffGBM-score-flex} beats the published baseline on every dataset and \textsc{FM} improves the tradeoff surface over it & Table~\ref{tab:headline}, Table~\ref{tab:per-dataset}, \S\ref{sec:headline} & Tuned per dataset & Headline performance claim (paired Wilcoxon $11/0$) \\
    \textsc{DiffGBM-FM} is the most calibrated headline row; the PF-ODE score corners give the tightest interval coverage & Table~\ref{tab:headline}, Table~\ref{tab:ct-slices-tuned} & Tuned per dataset & Headline aggregate calibration claim \\
    VP-FM is the preferred FM path among the tested paths & Table~\ref{tab:path} & Fixed diagnostic & Mechanism and model-selection evidence; not a claim that VP would win every independently tuned path comparison \\
    Deterministic FM is the recommended aggregate CRPS/latency sampler & Table~\ref{tab:stoch-eps} with Table~\ref{tab:tail-diagnostic} & Fixed diagnostic & Sampler diagnostic; stochasticity remains a conditional-calibration knob \\
    Residualizer-C is a reasonable headline residualizer choice & Tables~\ref{tab:off-vs-mean} and~\ref{tab:resid-AE} & Fixed diagnostic & Design choice for the frozen model families; automatic residualizer selection is left as future work \\
    Log-$\sigma$ sampling, explicit noise-level features, and EDM preconditioning motivate the score-side recipe axes & Toy diagnostic and development sweeps listed in Table~\ref{tab:appendix-provenance} & Fixed diagnostic & Mechanism evidence for the axes; the quantitative performance claim is the tuned \textsc{DiffGBM-score-flex} row \\
    \bottomrule
  \end{tabular}
\end{table}

The four score-side modifications of \S\ref{sec:score-side} were validated by smoke and synthetic-core sweeps before being exposed as tunable axes of the score-flex space (and combined in the frozen \textsc{score$^+$} reference bundle).
The residualizer-C configuration that the FM row uses was selected by the residualizer sweep summarised in Appendix~\ref{app:resid-config}. Log-sigma time sampling, the log-sigma noise feature, and EDM preconditioning were locked in as candidate axes by the \texttt{log\_sigma\_sweep}, \texttt{synthetic\_core}, and \texttt{pf\_ode\_sweep} development runs. We treat those runs as model-selection evidence rather than as separately reported tuned comparisons; the quantitative claim in the paper is the tuned \textsc{DiffGBM-score-flex} row in Table~\ref{tab:headline}, where these axes are searched jointly on the full eleven-dataset protocol.

\section{Residualizer Configuration}\label{app:resid-config}

\subsection{Off vs.\ Mean Residualization (FM)}\label{app:off-vs-mean}

We isolate the contribution of mean residualization on the FM side at matched configuration (VP path, uniform $t$, uniform loss weighting, residualizer-C extra parameters) using a ten-dataset mixed synthetic/real diagnostic suite and ten seeds.

\begin{table}[!h]
  \centering
  \caption{FM with mean residualization vs.\ no residualization (\texttt{off}). Paired differences (mean $-$ off, 100 paired observations); negative is better for all metrics shown.}\label{tab:off-vs-mean}
  \footnotesize
  \begin{tabular}{lrrrrr}
    \toprule
    Sampler & $\Delta$CRPS & $\Delta q$-MACE & $\Delta$KS stat & $\Delta\,|\text{cE}|$@90 & paired $t$ on $\Delta$CRPS \\
    \midrule
    ODE @ 5 steps  & $-0.23$ & $-0.022$ & $-0.034$ & $-0.032$ & $-4.30$ \\
    SDE @ 25 steps & $-0.05$ & $+0.001$ & $+0.003$ & $-0.001$ & $-2.91$ \\
    \bottomrule
  \end{tabular}
\end{table}

The ODE-5 advantage is large and broadly significant across calibration metrics (81/100 paired wins on $q$-MACE and 81/100 on $|\text{cE}|$@90). At SDE-25 the gap on calibration metrics is null (paired $t<1$); the aggregate CRPS gain is driven by two real-data outliers (\texttt{yacht} $+54\%$ relative, \texttt{concrete} $+15\%$) while six of ten datasets actually favour the unresidualized baseline. The mechanism is consistent with the few-step ODE being unable to absorb the conditional mean itself when the velocity net has $\leq 5$ Heun updates; with 25 SDE updates the velocity model recovers calibration on its own. Cost: mean residualization adds $\approx 12$\,s of fit time on these sizes ($\sim 8\times$ vs.\ no residualization), so the trade is most attractive in low-step regimes.

\subsection{Residualizer Variants A--E (FM side)}\label{app:resid-AE}

Five residualizer configurations were swept on a four-dataset, three-seed sub-suite (\texttt{california\_housing}, \texttt{diabetes}, \texttt{energy}, \texttt{protein}-5k subsample) under VP-FM-ODE at 5 steps; see Table~\ref{tab:resid-AE}.

\begin{table}[!h]
  \centering
  \caption{FM residualizer diagnostic. ``Conf.\ leaves'' is the LightGBM \texttt{num\_leaves}; ``$n_{\text{est}}^{(r)}$'' is the residualizer's \texttt{n\_estimators}; ``ES'' is early-stopping rounds in the residualizer (off means no ES).}\label{tab:resid-AE}
  \footnotesize
  \begin{tabular}{llrrrrrrr}
    \toprule
    Variant & Description & Leaves & $n_{\text{est}}^{(r)}$ & ES & CRPSS $\uparrow$ & $q$-MACE $\downarrow$ & $|\text{cE}|$@95 $\downarrow$ & fit s \\
    \midrule
    A & current-baseline   & 31 & 100 & off & 0.521 & 0.028 & 0.017 & 1.82 \\
    B & regularised        & 31 & 100 & 30  & 0.509 & 0.042 & 0.018 & 2.78 \\
    C & high capacity      & 63 & 300 & off & \textbf{0.523} & 0.039 & 0.011 & 8.03 \\
    D & ES moderate        & 63 & 500 & 30  & 0.516 & 0.033 & 0.021 & 5.91 \\
    E & ES + high capacity & 63 & 2000& 30  & 0.522 & 0.034 & \textbf{0.010} & 9.81 \\
    \bottomrule
  \end{tabular}
\end{table}

Within this fixed diagnostic, configurations C and E sit on the calibration--latency Pareto front. C is the headline choice (best CRPSS, $1.2\times$ faster than E); E edges C on tail coverage ($|\text{cE}|$@95 $0.010$ vs.\ $0.011$). A separate full-protein diagnostic shows E reclaiming the lead on \texttt{protein} (CRPSS $0.476$ vs.\ C's $0.449$), reinforcing the case that the optimal residualizer is dataset-dependent.

\subsection{Scale-Aware Residualization}\label{app:mean-scale}

The \texttt{mean\_scale} residualizer additionally divides residuals by a cross-validated conditional-std estimate $\hat\sigma(x)$. On the ten-dataset mixed synthetic/real diagnostic suite with ten seeds:

\begin{table}[!h]
  \centering
  \caption{Mean vs.\ mean-scale residualization on FM at matched configuration. Higher CRPSS is better; lower CRPS, DSS, $q$-MACE are better.}\label{tab:mean-scale}
  \footnotesize
  \begin{tabular}{lrrrr}
    \toprule
    Variant & CRPSS $\uparrow$ & CRPS $\downarrow$ & DSS $\downarrow$ & $q$-MACE $\downarrow$ \\
    \midrule
    VP-FM-ODE, mean       & \textbf{0.519} & \textbf{0.652} & \textbf{$+0.314$} & \textbf{0.031} \\
    VP-FM-ODE, mean-scale & 0.489 & 0.724 & $+0.665$ & 0.038 \\
    VP-FM-SDE, mean       & \textbf{0.517} & \textbf{0.657} & \textbf{$+0.380$} & \textbf{0.040} \\
    VP-FM-SDE, mean-scale & 0.475 & 0.750 & $+0.676$ & 0.049 \\
    \bottomrule
  \end{tabular}
\end{table}

Scale-aware residualization is uniformly worse: $+11\%$ CRPS on the ODE path, $+14\%$ on the SDE path, with DSS more than doubling. The per-$x$ scale estimator is itself noisy in these finite-sample regimes; dividing by it injects more variance into the velocity target than it removes. \texttt{mean} is therefore the only residualizer carried into the headline.

\section{Time Sampling and Loss Weighting}\label{app:t-and-loss}

\subsection{Time Sampling}\label{app:t-sampling}

\begin{table}[!h]
  \centering
  \caption{FM time-sampling sweep on fixed train/test splits on the ten non-CT benchmarks, 3 seeds each, matched residualizer-C, VP path. ``uniform'' is the headline choice with an endpoint anchor at $t=1$ at probability $0.05$.}\label{tab:t-sampling}
  \footnotesize
  \begin{tabular}{lrrrrr}
    \toprule
    Variant & CRPSS $\uparrow$ & DSS $\downarrow$ & $q$-MACE $\downarrow$ & KS $p{>}.05$ $\uparrow$ & samp s \\
    \midrule
    VP-FM-ODE uniform           & \textbf{0.6564} & $-0.154$ & 0.040 & 0.47 & 3.63 \\
    VP-FM-ODE uniform anchor16  & 0.6560 & \textbf{$-0.171$} & 0.040 & 0.53 & 3.84 \\
    VP-FM-ODE logbeta           & 0.6556 & $-0.175$ & \textbf{0.035} & \textbf{0.77} & 3.74 \\
    VP-FM-ODE logSNR            & 0.6538 & $-0.135$ & 0.042 & 0.40 & 3.71 \\
    \midrule
    score$^+$ uniform-$t$       & 0.6556 & \textbf{$-0.210$} & 0.038 & 0.50 & 6.95 \\
    score$^+$ log-$\sigma$ $t$  & \textbf{0.6565} & $-0.169$ & 0.038 & \textbf{0.70} & 6.58 \\
    \bottomrule
  \end{tabular}
\end{table}

For FM, log-beta and log-SNR sampling do not move CRPS by more than $0.4\%$ relative to uniform with endpoint anchor; the smoother density shape gains $+5\%$ on KS pass rate and $-13\%$ on $q$-MACE but the effect is within seed variance for CRPSS. The headline keeps uniform-$t$ for parsimony; log-beta is a viable substitute when KS uniformity is the priority. For the score model, log-sigma sampling clearly helps PIT uniformity (KS pass $0.50 \to 0.70$) at the cost of slightly worse DSS; earlier score-side sweeps found the EDM-default $\Norm(-1.2,1.2^2)$ setting to be the safest dataset-agnostic choice among the tested log-$\sigma$ distributions. We keep log-sigma in the \textsc{score$^+$} headline because the KS gain is the more interpretable calibration win.

\subsection{\texorpdfstring{Loss Weighting (min-SNR-$\gamma$)}{Loss Weighting (min-SNR-gamma)}}\label{app:loss-weighting}

Min-SNR-$\gamma$~\citep{hang2023minsnr} caps the per-noise-level loss weight at $\min(\mathrm{SNR}(t),\gamma)$, limiting the influence of the low-noise rows whose targets carry the largest gradients. It is one of the tunable loss-weighting axes of the score-flex space (Table~\ref{tab:tuning-spaces}); this fixed diagnostic isolates its effect on the FM side.

\begin{table}[!h]
  \centering
  \caption{Min-SNR-$\gamma$ loss weighting on FM, fixed train/test splits on the ten non-CT benchmarks, 3 seeds each. $\gamma=5$ collapses to uniform on most of the support; $\gamma=1$ amplifies the late-time loss most aggressively.}\label{tab:minsnr}
  \footnotesize
  \begin{tabular}{lrrrrr}
    \toprule
    Variant & CRPSS $\uparrow$ & CRPS $\downarrow$ & DSS $\downarrow$ & $q$-MACE $\downarrow$ & KS $p{>}.05$ $\uparrow$ \\
    \midrule
    VP-FM-ODE uniform-$w$ & \textbf{0.6564} & \textbf{4.097} & $-0.159$ & 0.040 & 0.47 \\
    VP-FM-ODE min-SNR-5   & 0.6564 & 4.119 & $-0.167$ & \textbf{0.037} & \textbf{0.60} \\
    VP-FM-ODE min-SNR-1   & 0.6516 & 4.223 & $-0.104$ & 0.044 & 0.37 \\
    VP-FM-SDE uniform-$w$ & \textbf{0.6540} & \textbf{4.114} & $-0.080$ & 0.051 & 0.30 \\
    VP-FM-SDE min-SNR-5   & 0.6538 & 4.138 & $-0.085$ & 0.050 & 0.30 \\
    VP-FM-SDE min-SNR-1   & 0.6485 & 4.239 & $-0.046$ & 0.057 & 0.23 \\
    \bottomrule
  \end{tabular}
\end{table}

Min-SNR-$\gamma$ is a sharpness-versus-tail trade-off. $\gamma=1$ hurts CRPS by $0.8$--$1.3\%$ across both samplers (Wilcoxon $p \leq 0.003$ for both); $\gamma=5$ is statistically indistinguishable from uniform weighting on CRPS, but improves PIT KS pass rate on the ODE path. We keep uniform weighting in the FM headline.

\section{Stochasticity (Velocity-Noise) Sweeps}\label{app:stoch}

\subsection{Stochasticity Strength}\label{app:stoch-eps}

We sweep velocity stochasticity on the VP path under linear and $\sqrt{t}$ schedules with residualizer-C, plus one residualizer-E check; results in Table~\ref{tab:stoch-eps} are averages over fixed train/test splits on the ten non-CT benchmarks, 3 seeds each, at 25 SDE steps.

\begin{table}[!h]
  \centering
  \caption{Stochasticity sweep. The $\varepsilon=0$ corner is the deterministic FM recipe (Heun ODE @ 5 steps); SDE rows use 25 steps. Sample time is omitted because the ODE and SDE rows use different benchmark protocols.}\label{tab:stoch-eps}
  \footnotesize
  \begin{tabular}{lrrrr}
    \toprule
    Variant & CRPSS $\uparrow$ & CRPS $\downarrow$ & $|\text{cE}|$@90 $\downarrow$ & $|\text{cE}|$@95 $\downarrow$ \\
    \midrule
    $\varepsilon=0$ headline (ODE)    & 0.669 & --     & \textbf{0.024} & \textbf{0.017} \\
    $\varepsilon=0.25$ linear, res-C  & 0.656 & 4.099  & 0.028 & 0.015 \\
    $\varepsilon=0.50$ linear, res-C  & 0.655 & 4.104  & 0.036 & 0.017 \\
    $\varepsilon=1.00$ linear, res-C  & 0.654 & 4.114  & 0.047 & 0.022 \\
    $\varepsilon=0.50$ sqrt, res-C    & 0.655 & 4.101  & 0.035 & 0.017 \\
    $\varepsilon=1.00$ sqrt, res-C    & 0.655 & 4.107  & 0.043 & 0.020 \\
    $\varepsilon=1.00$ linear, res-E  & 0.651 & 4.128  & 0.048 & 0.024 \\
    \bottomrule
  \end{tabular}
\end{table}

In this fixed diagnostic sweep, no tested SDE setting Pareto-dominates $\varepsilon=0$. The closest contender ($\varepsilon=0.25$ linear) matches $|\text{cE}|$@95 ($0.015$ vs.\ $0.017$) but is strictly worse on CRPSS and $|\text{cE}|$@90, and pays $25$ SDE steps vs.\ $5$ ODE steps.

\subsection{Schedule Shape}\label{app:stoch-schedule}

The schedule $\varepsilon(t) = c\,t$ (linear) is the headline form. The $\sqrt{t}$ alternative was tested for parity at $\varepsilon \in \{0.5, 1.0\}$ and produces statistically indistinguishable CRPSS (Table~\ref{tab:stoch-eps}). Both schedules vanish at the data endpoint, which suppresses recovered-score error in the stochastic drift (Eq.~\eqref{eq:score} and Proposition~\ref{prop:endpoint-amp}); neither shape helps once $\varepsilon$ is small.

\section{Fixed Path Diagnostic}\label{app:path}

\begin{table}[!h]
  \centering
  \caption{Flow-path diagnostic under the 5-step deterministic ODE sampler: linear vs.\ trigonometric vs.\ variance-preserving, 8 datasets $\times$ 3 seeds, matched residualizer-C. Lower is better for all columns; bold is best per column.}\label{tab:path}
  \footnotesize
  \begin{tabular}{lrrrr}
    \toprule
    Path & CRPS $\downarrow$ & $q$-MACE $\downarrow$ & $|\text{cE}|$@95 $\downarrow$ & samp s \\
    \midrule
    Linear         & 4.324 & \textbf{0.026} & 0.032 & 0.49 \\
    Trigonometric  & 4.306 & 0.028 & \textbf{0.028} & \textbf{0.43} \\
    Variance-preserving (VP) & \textbf{4.294} & 0.028 & \textbf{0.028} & 0.46 \\
    \bottomrule
  \end{tabular}
\end{table}

This fixed diagnostic is meant to isolate geometry, not to select the final path. Under matched residualizer-C, a shared 5-step ODE sampler, and fixed LightGBM settings, VP is best on CRPS, while trig and VP both reduce 95\% coverage error relative to the linear path. The pattern tracks variance preservation: VP and trig satisfy $\alpha^2+\beta^2 \approx 1$, keeping $y_t$ at roughly unit scale across $t$ and stabilising the tree regressor's feature space. The tuned mechanism ablation in Table~\ref{tab:mechanism-ablation} supersedes this fixed ranking for performance claims: with per-dataset LightGBM tuning, linear has the best aggregate CRPSS, trig is competitive and wins two datasets, and VP remains fastest. We therefore read path choice as a finite-model tradeoff: variance-preserving paths stabilise feature scale, while linear FM can have a simpler displacement-like velocity target that wins CRPS after residualization and tuning.

\section{Probabilistic-Family Baselines}\label{app:competitors}

We compare the headline DiffGBM rows and the published baseline against six external probabilistic regressors (Table~\ref{tab:competitors}). Hyperparameters for every displayed family are selected per dataset on the same fold-0 tuning split, then evaluated on folds~1--5. The CARD row uses a compact benchmark adapter with the same two-stage conditional-mean plus conditional-diffusion structure as CARD~\citep{han2022card}, so conclusions involving CARD should be read as comparisons to this adapter rather than to a fully optimized CARD pipeline.

\begin{table}[!h]
  \centering
  \caption{Probabilistic baselines under the tuning/evaluation protocol, mean over the eleven datasets. rel-CRPS is normalized by the best displayed variant on each dataset. \textsc{Treeffuser-score$^+$} is the frozen PF-ODE score bundle, shown as the calibration ablation (it is subsumed by score-flex). Sample-time excludes fitting and is dominated by the large CT-slice dataset (see Table~\ref{tab:timing} for a ten-non-CT breakdown). The iBUG DSS is dominated by a single \texttt{wine} fold where the model assigns near-zero predictive variance to a mispredicted test point.}\label{tab:competitors}
  \scriptsize
  \begin{tabular}{lrrrrrrr}
    \toprule
    Model & CRPSS $\uparrow$ & rel-CRPS $\downarrow$ & CRPS $\downarrow$ & DSS $\downarrow$ & $q$-MACE $\downarrow$ & $|\text{cE}|$@95 $\downarrow$ & samp s \\
    \midrule
    DiffGBM-score-flex (ours)       & \textbf{0.725} & \textbf{1.108} & 3.687 & $-$0.093 & 0.045 & 0.037 & 508.0 \\
    Treeffuser-score$^+$ (ablation) & 0.708 & 1.372 & 3.764 & $-$0.166 & \textbf{0.027} & \textbf{0.018} & 421.8 \\
    DiffGBM-FM (ours)               & 0.707 & 1.335 & 3.829 & \textbf{$-$0.178} & 0.036 & 0.021 & 60.8 \\
    Treeffuser-published            & 0.699 & 1.249 & 3.953 & 1.056 & 0.053 & 0.026 & 759.9 \\
    Quantile-regression LightGBM    & 0.694 & 1.827 & 3.721 & 0.193 & 0.038 & 0.043 & 0.4 \\
    Deep ensemble                   & 0.685 & 1.557 & \textbf{3.658} & $-$0.049 & 0.054 & 0.019 & 0.1 \\
    iBUG (XGBoost)                  & 0.679 & 1.566 & 3.833 & $1.68{\times}10^7$ & 0.069 & 0.060 & 5.0 \\
    CatBoost (uncertainty)          & 0.666 & 1.816 & 3.743 & 0.367 & 0.045 & 0.066 & \textbf{0.0} \\
    CARD-style diffusion            & 0.662 & 1.747 & 4.426 & 0.435 & 0.050 & 0.044 & 62.1 \\
    NGBoost (Gaussian)              & 0.626 & 2.472 & 4.048 & 2.035 & 0.051 & 0.098 & 2.8 \\
    \bottomrule
  \end{tabular}
\end{table}

Under this per-dataset tuning protocol, the DiffGBM rows lead on aggregate CRPSS and rel-CRPS, with score-flex best on both; the frozen score$^+$ ablation is the strongest interval-calibration row ($q$-MACE $0.027$, $|\text{cE}|$@95 $0.018$), which is the role it plays now that score-flex is the headline accuracy row. The external baselines remain important: deep ensembles have the best mean raw CRPS, though this scale-sensitive average is less informative about cross-dataset dominance than CRPSS or rel-CRPS, and among external families only the deep ensemble takes a per-dataset raw-CRPS win (\texttt{diabetes} and \texttt{kin8nm}); every other dataset goes to a DiffGBM row. Their aggregate rel-CRPS remains worse, indicating that no single tuned external family transfers as evenly across the benchmark suite as DiffGBM.

\section{Extended Per-Dataset Results}\label{app:perdata-ext}

Table~\ref{tab:perdata-ext} reports CRPS, $|\text{cE}|$@90, and sample-generation time on each of the eleven benchmarks for the published baseline and the two DiffGBM rows (Flex $=$ score-flex SDE), averaged over the five evaluation folds. The ``best margin'' column gives the gap between the winning row and the second-best in raw CRPS. The per-row picture matches the headline: among the three DiffGBM rows, score-flex wins raw CRPS on eight datasets---by growing margins as data scales (up to $+11\%$ on \texttt{ct\_slices})---while FM wins the three small-to-mid sets where its residualized ODE and near-zero sample time are most valuable, and the published baseline wins none.

\begin{table}[!h]
  \centering
  \caption{Extended per-dataset DiffGBM results (with the published baseline), ordered roughly by evaluation-fold training size. ``best margin'' is winner-over-second-best in raw CRPS; the bold cell in each block is the per-metric winner on that row among the three rows shown, not against the external baselines (the deep ensemble takes the raw-CRPS win on \texttt{diabetes} and \texttt{kin8nm}; \S\ref{sec:headline}). Numbers come from \texttt{benchmarks/results/tuning/eval/*.jsonl}.}\label{tab:perdata-ext}
  \scriptsize
  \begin{tabular}{lrrrrrrrrrr}
    \toprule
    & & \multicolumn{3}{c}{CRPS $\downarrow$} & \multicolumn{3}{c}{$|\text{cE}|$@90 $\downarrow$} & \multicolumn{3}{c}{sample s} \\
    \cmidrule(lr){3-5} \cmidrule(lr){6-8} \cmidrule(lr){9-11}
    Dataset & best margin & Pub.\ & Flex & FM & Pub.\ & Flex & FM & Pub.\ & Flex & FM \\
    \midrule
    yacht            & FM\,$+$50.0\% & 0.447 & 0.404 & \textbf{0.269} & 0.084 & 0.055 & \textbf{0.046} & 0.6 & 2.1 & \textbf{0.1} \\
    diabetes         & Flex\,$+$3.6\% & 36.04 & \textbf{33.98} & 35.19 & 0.094 & 0.086 & \textbf{0.071} & 0.7 & 0.2 & \textbf{0.0} \\
    energy           & Flex\,$+$1.5\% & 0.236 & \textbf{0.199} & 0.202 & 0.077 & 0.030 & \textbf{0.022} & 7.3 & 7.2 & \textbf{0.1} \\
    concrete         & FM\,$+$5.5\% & 2.71 & 2.27 & \textbf{2.15} & 0.034 & 0.021 & \textbf{0.021} & 1.7 & 7.1 & \textbf{0.1} \\
    wine             & Flex\,$+$5.9\% & 0.294 & \textbf{0.278} & 0.317 & 0.033 & 0.037 & \textbf{0.018} & 35.5 & 151.5 & \textbf{15.5} \\
    kin8nm           & FM\,$+$1.1\% & 0.065 & 0.056 & \textbf{0.055} & 0.012 & \textbf{0.008} & 0.021 & 144.8 & \textbf{34.7} & 35.8 \\
    power            & Flex\,$+$3.2\% & 1.63 & \textbf{1.47} & 1.52 & \textbf{0.008} & 0.090 & 0.010 & 155.1 & 433.9 & \textbf{5.2} \\
    naval            & Flex\,$+$26.5\% & 0.000163 & \textbf{0.000129} & 0.000209 & 0.087 & \textbf{0.057} & 0.060 & 460.5 & 533.7 & \textbf{60.7} \\
    cali.\ housing   & Flex\,$+$5.2\% & 0.200 & \textbf{0.190} & 0.204 & 0.011 & 0.038 & \textbf{0.008} & 185.9 & 499.9 & \textbf{29.2} \\
    protein          & Flex\,$+$8.7\% & 1.70 & \textbf{1.57} & 1.77 & \textbf{0.003} & 0.037 & 0.005 & 834.9 & 1290.6 & \textbf{208.0} \\
    ct slices        & Flex\,$+$11.3\% & 0.159 & \textbf{0.143} & 0.433 & 0.075 & 0.084 & \textbf{0.036} & 6531.9 & 2627.6 & \textbf{314.0} \\
    \bottomrule
  \end{tabular}
\end{table}

FM is the fastest sampler on 8/11 datasets and remains cheap at scale (on \texttt{protein}, $4.0\times$ faster than published and $6.2\times$ faster than the score-flex SDE arm). The two DiffGBM rows split the accuracy/calibration roles cleanly: score-flex has the best CRPS on the mid and large datasets but inherits the SDE sampler's looser central coverage, while FM leads $|\text{cE}|$@90 on the small-to-mid sets. Under this evaluation protocol the per-dataset ranking reproduces the headline pattern: score-flex is the accuracy row and dominates as data grows, FM is the fast, well-calibrated row on smaller data, and the published SDE is best on none.

\section{Robustness by Dataset Size}\label{app:robustness}

Table~\ref{tab:robustness-size} groups the tuned evaluation artifacts by post-split training rows: small datasets have at most 1{,}000 rows, medium datasets have 1{,}001--10{,}000 rows, and large datasets have more than 10{,}000 rows. The grouping sharpens the headline story. Score-flex leads every size group on CRPSS, most decisively on the large group, and FM is close behind on small and medium data while trailing on large; the earlier large-dataset inversion, where the published SDE was strongest at scale, is gone once the score recipe is jointly tuned. FM's rel-CRPS advantage on small data (it is the cheapest, tightly calibrated corner) is why it remains the recommended small-data operating point despite score-flex's slightly higher CRPSS.

\begin{table}[!h]
  \centering
  \caption{Dataset-size robustness summary. Entries are mean CRPSS within each size group after first averaging the five evaluation folds per dataset.}\label{tab:robustness-size}
  \scriptsize
  \begin{tabular}{lrrr}
    \toprule
    Model & Small (4) & Medium (4) & Large (3) \\
    \midrule
    DiffGBM-score-flex & 0.728 & \textbf{0.710} & \textbf{0.742} \\
    DiffGBM-FM & \textbf{0.729} & 0.684 & 0.707 \\
    Treeffuser-published & 0.701 & 0.680 & 0.723 \\
    QReg-LightGBM & 0.711 & 0.666 & 0.707 \\
    CatBoost-unc. & 0.724 & 0.612 & 0.659 \\
    NGBoost & 0.718 & 0.550 & 0.605 \\
    Deep ensemble & 0.699 & 0.672 & 0.682 \\
    iBUG & 0.700 & 0.637 & 0.708 \\
    CARD-style diffusion & 0.632 & 0.667 & 0.695 \\
    \bottomrule
  \end{tabular}
\end{table}

The rank diagnostics in Table~\ref{tab:robustness-ranks} agree. The Friedman test detects differences among the nine families ($\chi^2=32.1$, $p<10^{-3}$), and the Nemenyi critical difference at $\alpha=0.05$ separates score-flex from the four weakest families (NGBoost, CARD-style diffusion, CatBoost, iBUG). The conservative all-pairs post-hoc does not by itself separate score-flex from the published SDE, QReg-LightGBM, or the deep ensemble; the decisive within-family evidence is instead the paired test of \S\ref{sec:headline}, where score-flex beats the published baseline on all eleven datasets. We therefore use the cross-family ranks as robustness context and the paired test as the headline significance claim.

\section{Large-Dataset Analysis: CT Slice Localization}\label{app:large-data}

\texttt{ct\_slices}~\citep{graf2011ctslices} (CT slice axial-location regression; 53{,}500 rows and 384 numeric features after dropping the patient identifier, from the UCI repository, retrieved via OpenML data id 46300) is the largest benchmark in the suite and anchors the large-data end of the headline aggregates under the same six-fold tuned protocol (44{,}583 training rows per evaluation fold; fold-0 tuning, folds 1--5 evaluation). This appendix reports it in more detail because it is the one dataset where the published recipe was previously competitive, and because it carries diagnostic-only residualizer-off twins that the other datasets lack. The dataset is nearly deterministic given the features---every tuned family reaches CRPSS $\geq 0.85$---so CRPS differences are decided at conditional scales around one percent of the marginal spread, which makes it a sensitive probe of the score-side recipe. As in Appendix~\ref{app:datasets}, the folds are random over slices rather than grouped by patient; all rows share those folds, so the comparisons below are internally matched but untested under patient-grouped evaluation.

\begin{table}[!h]
  \centering
  \caption{Tuned \texttt{ct\_slices} rows under the headline protocol (five evaluation folds; mean $\pm$ sd for CRPS, fold means otherwise), all at the equalized 40-trial fold-0 budget. \textsc{DiffGBM-score-flex} is the headline row and is always the SDE arm, sampler-matched to the published baseline; the PF-ODE arm is a separately tuned 40-trial study over the same recipe space, shown here as an additional operating point. The frozen score$^+$/FM bundles and their residualizer-off twins are diagnostics tuned under the identical protocol. The deep ensemble, the strongest baseline on this dataset, is shown for context.}\label{tab:ct-slices-tuned}
  \scriptsize
  \begin{tabular}{lrrrrr}
    \toprule
    Row & CRPS & CRPSS & $|\text{cE}|$@90 & Fit (s) & Sample (s) \\
    \midrule
    Published Euler-50           & $0.159 \pm 0.008$ & 0.988 & 0.075 & 1234 & 6532 \\
    Score$^+$ PF-ODE-25          & $0.463 \pm 0.006$ & 0.964 & 0.018 & 438 & 3151 \\
    Score$^+$ no-resid.\ PF-ODE-25 & $0.287 \pm 0.005$ & 0.978 & 0.025 & 306 & 3297 \\
    FM-VP ODE-5                  & $0.433 \pm 0.004$ & 0.966 & 0.036 & \textbf{162} & \textbf{314} \\
    FM-VP no-resid.\ ODE-5       & $1.877 \pm 0.048$ & 0.853 & 0.078 & 187 & 337 \\
    DiffGBM-score-flex, SDE-50   & $\mathbf{0.143 \pm 0.003}$ & \textbf{0.989} & 0.084 & 506 & 2628 \\
    DiffGBM-score-flex, PF-ODE   & $0.252 \pm 0.003$ & 0.980 & \textbf{0.005} & 486 & 3762 \\
    Deep ensemble                & $0.159 \pm 0.018$ & 0.988 & 0.076 & 619 & $<1$ \\
    \bottomrule
  \end{tabular}
\end{table}

Table~\ref{tab:ct-slices-tuned} tells a two-part story. Among the \emph{frozen} bundles the published SDE is the sharpest ($0.159$) and only the deep ensemble matches it, while the frozen score$^+$ and FM bundles trail by $2.7$--$3.2\times$; the residualizer-off twins split that gap---removing the fixed residualizer-C roughly halves the score$^+$ deficit ($0.463 \to 0.287$), so it is a genuine partial bottleneck at this scale, while the same switch makes FM collapse ($0.433 \to 1.877$), so FM depends on residualization here. But the jointly tuned \textsc{DiffGBM-score-flex} row overturns that frozen-bundle ranking: its SDE arm reaches $0.143$, beating both the tuned published SDE ($0.159$) and the deep ensemble ($0.159$)---the sharpest row on the dataset. The two separately tuned sampler arms trace the accuracy/calibration frontier directly: the headline SDE arm is sharpest but inherits the stochastic sampler's loose central coverage ($|\text{cE}|$@90 $0.084$), while the PF-ODE arm over the same recipe space is the best-calibrated row anywhere in the table ($0.005$) at some CRPS cost ($0.252$), consistent with the operating-point reading in \S\ref{sec:disc}.

\begin{table}[!h]
  \centering
  \caption{Eval-only sampler ablation on \texttt{ct\_slices} (evaluation folds 1--3; mean $\pm$ sd). Each fold is refit once from the tuned configuration and sampled under multiple solver settings; ``default'' marks each row's tuned headline sampler, recomputed on the same folds. Sample times are per-fold wall clock and comparable within a model, not across models (tuned ensemble sizes differ).}\label{tab:ct-slices-solver}
  \scriptsize
  \begin{tabular}{llrrr}
    \toprule
    Model & Sampler & CRPS & $|\text{cE}|$@90 & Sample (s) \\
    \midrule
    Published & Euler SDE-15 & $0.208 \pm 0.016$ & 0.091 & 252 \\
    Published & Euler SDE-25 & $0.181 \pm 0.015$ & 0.088 & 484 \\
    Published & Euler SDE-50 (default) & $0.168 \pm 0.014$ & 0.085 & 830 \\
    Published & Euler SDE-100 & $0.162 \pm 0.013$ & 0.085 & 1984 \\
    \midrule
    Score$^+$ & Heun PF-ODE-25 (default) & $0.465 \pm 0.007$ & 0.018 & 899 \\
    Score$^+$ & Heun PF-ODE-50 & $0.467 \pm 0.007$ & 0.018 & 1236 \\
    Score$^+$ & Euler SDE-50 & $0.457 \pm 0.006$ & 0.018 & 618 \\
    \midrule
    Score$^+$ no-resid. & Heun PF-ODE-25 (default) & $0.285 \pm 0.004$ & 0.021 & 3305 \\
    Score$^+$ no-resid. & Euler SDE-50 & $0.281 \pm 0.007$ & 0.090 & 3003 \\
    \midrule
    FM-VP & ODE-5 (default) & $0.426 \pm 0.005$ & 0.032 & 458 \\
    FM-VP & ODE-10 & $0.425 \pm 0.005$ & 0.029 & 726 \\
    FM-VP & ODE-25 & $0.424 \pm 0.005$ & 0.028 & 2241 \\
    \midrule
    FM-VP no-resid. & ODE-5 (default) & $1.847 \pm 0.024$ & 0.085 & 341 \\
    FM-VP no-resid. & ODE-10 & $1.076 \pm 0.004$ & 0.095 & 606 \\
    FM-VP no-resid. & ODE-25 & $0.984 \pm 0.007$ & 0.090 & 1492 \\
    \bottomrule
  \end{tabular}
\end{table}

Table~\ref{tab:ct-slices-solver} rules the sampler out for the score arm and localizes the FM collapse. First, swapping the score$^+$ samplers---doubling the Heun PF-ODE steps, or substituting the published 50-step Euler SDE---moves CRPS by less than $0.01$ on both score$^+$ variants. Second, the published model wins even when handicapped to 15 Euler steps ($0.208$ versus $0.281$ for the best score$^+$ cell at any step count) and is essentially converged at its default 50 steps. Third, the FM no-residualizer collapse is roughly half solver discretization---steps recover $1.85 \to 0.98$---but it plateaus far above the residualized row, while residualized FM is already solver-converged at 5 steps. Taken together, the frozen score$^+$ bundle's deficit against the published row on this dataset is attributable to what the score model learns at training time---not to the sampler, and only partly to the residualizer---rather than to any inference-time choice. Which training-time factor dominates is answered directly by the jointly tuned score-flex space below, which searches these axes rather than transferring a single configuration.

The ablation adds one calibration observation: sampling the same no-residualizer score$^+$ model with the stochastic Euler SDE degrades its 90\% coverage error from $0.021$ (PF-ODE) to $0.090$, so the calibration edge of the PF-ODE rows on this dataset is contributed by the deterministic sampler, not by the training recipe alone---the same accuracy/calibration split the two score-flex arms show in Table~\ref{tab:ct-slices-tuned}. On end-to-end cost, the tuned published row is the most expensive row at this scale (1234\,s fit $+$ 6532\,s sample), because its optimum selects the largest histogram resolution and the deepest ensemble; FM remains the cheapest (162\,s $+$ 314\,s) and the score-flex SDE row sits between (506\,s $+$ 2628\,s), so FM's end-to-end advantage does carry to the largest dataset.

The jointly tuned recipe closes and reverses the frozen-bundle gap. Rather than comparing frozen bundles, \textsc{DiffGBM-score-flex} exposes the score-side recipe axes---score parameterization, noise features, $t$ sampling and its log-$\sigma$ prior, loss weighting, residualization, and histogram resolution---as jointly tunable dimensions over the shared LightGBM surface, under the published baseline's 50-step Euler SDE sampler. Tuned at the same equalized 40-trial budget as every other row, this headline SDE arm reaches CRPS $0.143 \pm 0.003$ (Table~\ref{tab:ct-slices-tuned}), surpassing both the tuned published SDE ($0.159$) and the deep ensemble ($0.159$). The space explicitly seeds and evaluates the published corner (noise prediction / \texttt{raw\_time} / uniform $t$ / residualizer off), so departing it would be a selected move---but on this dataset the tuner does \emph{not} depart it. The fold-0 optimum is that same noise-prediction recipe with a finer histogram (\texttt{max\_bin}\ $255 \to 1023$), and among the forty trials the best noise-prediction fold-0 CRPS ($0.144$) beats the best EDM one ($0.155$); the published row's \emph{own} tuned space, which now also tunes histogram resolution, moves the same way (noise prediction with \texttt{max\_bin}\ $4095$). So on this uniquely near-deterministic, high-SNR dataset the score-flex advantage over the published baseline is histogram resolution and capacity within the shared surface---an axis the frozen published bundle held fixed---rather than the EDM parameterization, which here is slightly \emph{worse} for sharpness. EDM's value on this dataset instead appears as calibration: the separately tuned PF-ODE arm of the same recipe space (EDM / log-$\sigma$ prior / min-SNR weighting) is the best-calibrated row anywhere in Table~\ref{tab:ct-slices-tuned} ($|\text{cE}|$@90 $0.005$) at a CRPS cost. This is precisely the behaviour the flexible space is designed for, and \texttt{ct\_slices} is its most stringent test: on the other ten datasets the same space selects the EDM recipe as a genuine capacity-matched win (best EDM below best noise prediction on all ten fold-0 studies, by up to $9$--$11\%$ on \texttt{protein} and \texttt{diabetes}), and on the one near-deterministic dataset it selects noise prediction with finer bins. Either way the tuned space contains the published corner and, at a matched budget and sampler, beats it on all eleven datasets.

\section{Conditional Tail Calibration Diagnostic}\label{app:tail-diagnostic}

Aggregate coverage can hide where a sampler is failing. As a lightweight conditional diagnostic, we bin held-out points by the model's predicted interquartile range (IQR) and report coverage in the highest-IQR quintile, together with IQR-MACE, the mean absolute coverage error across the five predicted-IQR bins. This is not a conditional-coverage guarantee, but it tests whether the predictive intervals behave differently on points the model itself considers uncertain. Table~\ref{tab:tail-diagnostic} uses the sampler-cost sweep artifacts, averaging folds within each dataset and then averaging across the ten datasets; as in \S\ref{sec:sampler-cost}, the underlying configurations are the frozen bundles tuned in the earlier fixed-resolution round rather than the headline configurations.

\begin{table}[!h]
  \centering
  \caption{Conditional calibration proxy using predicted-IQR bins, averaged over the ten non-CT datasets after fold averaging. Top-IQR Cov@$q$ is empirical coverage in the highest predicted-IQR quintile; IQR-MACE@$q$ is mean absolute coverage error across all five predicted-IQR bins. FM-SDE rows are eval-only sampler changes from the tuned FM configurations.}\label{tab:tail-diagnostic}
  \scriptsize
  \begin{tabular}{lrrrrr}
    \toprule
    Row & Top-IQR Cov@90 & IQR-MACE@90 & Top-IQR Cov@95 & IQR-MACE@95 & $|\text{cE}|$@95 \\
    \midrule
    Published Euler-50 & 0.927 & 0.053 & 0.961 & 0.030 & 0.023 \\
    Score$^+$ PF-ODE-25 & 0.880 & \textbf{0.042} & 0.932 & 0.031 & \textbf{0.015} \\
    FM-VP ODE-5 & 0.854 & 0.051 & 0.906 & 0.040 & 0.024 \\
    FM-VP SDE-25 & \textbf{0.902} & 0.050 & \textbf{0.944} & \textbf{0.030} & 0.018 \\
    FM-linear ODE-5 & 0.848 & 0.048 & 0.902 & 0.035 & 0.020 \\
    FM-linear SDE-25 & 0.891 & 0.053 & 0.935 & 0.034 & 0.021 \\
    \bottomrule
  \end{tabular}
\end{table}

The diagnostic supports a tradeoff rather than a one-sided conclusion. Deterministic FM under-covers the most uncertain quintile at 95\% nominal coverage: 0.906 for VP and 0.902 for linear. Adding stochasticity lifts those numbers to 0.944 and 0.935, respectively, and improves VP's IQR-MACE@95 from 0.040 to 0.030. However, the same SDE rows require 25 stochastic steps and, as Figure~\ref{fig:sampler-cost} shows, are not the aggregate CRPS/latency default. The published SDE and score$^+$ remain competitive on this diagnostic, so the safest interpretation is that stochastic FM is a useful conditional-calibration option, not a uniformly better sampler.

\section{Reproducibility}\label{app:repro}

Code, benchmark harness, and result artifacts are available at \url{https://github.com/silaskoemen/diffgbm}. The released package is on PyPI as \texttt{diffgbm} (\texttt{pip install diffgbm}); its defaults are the score-side recipe of \S\ref{sec:score-side} (EDM preconditioning, log-$\sigma$ feature and $t$ sampling, conditional-mean residualization), and the published baseline is the explicit configuration \texttt{score\_parameterization="noise"}, \texttt{noise\_features="raw\_time"}, \texttt{t\_sampling="uniform"}, \texttt{residualize="off"}. Every benchmark search space pins these axes explicitly, so the reported rows do not depend on the package defaults. The repository contains the implementation in \texttt{src/diffgbm/}, the benchmark harness in \texttt{benchmarks/}, the paper sources in \texttt{paper/}, and the \texttt{pixi.toml}/\texttt{pixi.lock} environment used for the reported runs. Benchmark search-space and result-artifact names retain the \texttt{treeffuser\_} prefix from before the package rename, so that the file names cited below match the committed artifacts. All sweeps in this paper are driven by YAML configs under \texttt{benchmarks/configs/}. Each result file is a JSON Lines document recording the full hyperparameter set, random seeds, the repository commit, a content hash of the model source tree, and every reported metric on a per-dataset/per-fold row. Rows also carry a \texttt{git\_dirty} flag, set throughout because the harness writes results into tracked paths as a run proceeds; the source hash is the meaningful provenance, and every reported row hashes to the committed \texttt{src/} tree at the commit it records. The headline tuning runs persist their selected configurations under \texttt{benchmarks/results/tuning/best\_params/} and their five-fold evaluation rows under \texttt{benchmarks/results/tuning/eval/}. The configs and result artifacts that produced the tables are listed alongside in Table~\ref{tab:appendix-provenance}; the paper PDF is rebuilt with the \texttt{paper} pixi task.

\begin{table}[t]
  \centering
  \caption{Provenance: appendix tables and the config or script each was produced from. Configs named \texttt{*.yaml} live under \texttt{benchmarks/configs/} and their outputs under \texttt{benchmarks/results/raw/}; scripts live under \texttt{benchmarks/scripts/}. \texttt{R} abbreviates \texttt{benchmarks/results/}.}\label{tab:appendix-provenance}
  \footnotesize
  \setlength{\tabcolsep}{4pt}
  \begin{tabular}{@{}>{\raggedright\arraybackslash}p{0.15\textwidth}>{\raggedright\arraybackslash}p{0.37\textwidth}>{\raggedright\arraybackslash}p{0.40\textwidth}@{}}
    \toprule
    Table & Config or script & Result file \\
    \midrule
    \ref{tab:off-vs-mean}   & \texttt{fm\_off\_vs\_mean\_sweep}   & \texttt{fm\_off\_vs\_mean\_sweep\_\_*} \\
    \ref{tab:conditioning}  & analytic summary & -- \\
    \ref{tab:resid-AE}      & \texttt{residualizer\_fm\_sweep}    & \texttt{residualizer\_fm\_sweep\_\_*} \\
    \ref{tab:mean-scale}    & \texttt{fm\_mean\_scale\_sweep}     & \texttt{fm\_mean\_scale\_sweep\_\_*} \\
    \ref{tab:t-sampling}    & \texttt{fm\_t\_sampling\_sweep}     & \texttt{fm\_t\_sampling\_sweep\_\_*} \\
    \ref{tab:minsnr}        & \texttt{fm\_loss\_weighting\_sweep} & \texttt{fm\_loss\_weighting\_sweep\_\_*} \\
    \ref{tab:stoch-eps}     & \texttt{sde\_lower\_stoch\_sweep}, \texttt{stochasticity\_schedule\_sweep} & \texttt{sde\_lower\_stoch\_sweep\_\_*}, \texttt{stochasticity\_schedule\_sweep\_\_*} \\
    \ref{tab:path}          & \texttt{ode\_vs\_sde\_path\_ablation} & \texttt{ode\_vs\_sde\_path\_ablation\_\_*} \\
    \ref{tab:competitors}, \ref{tab:per-dataset}, \ref{tab:perdata-ext}, \ref{tab:headline} & \texttt{tuning\_manifest} & \texttt{R/tuning/eval/*} \\
    \ref{tab:robustness-ranks}, \ref{tab:robustness-size} & \texttt{summarize\_robustness.py} & \texttt{R/selected/robustness\_summary.md} \\
    \ref{tab:mechanism-ablation} & \texttt{mechanism\_ablation\_manifest} & \texttt{R/mechanism\_ablation/eval/*} \\
    \ref{fig:sampler-cost}, \ref{tab:tail-diagnostic} & \texttt{run\_sampler\_cost\_sweep.py}, \texttt{summarize\_sampler\_cost\_sweep.py} & \texttt{R/sampler\_cost\_sweep/eval/} \texttt{sampler\_cost\_sweep.jsonl} \\
    \ref{tab:ct-slices-tuned} & \texttt{tuning\_manifest} & \texttt{R/tuning/eval/ct\_slices\_\_*} \\
    \ref{tab:ct-slices-solver} & \texttt{run\_sampler\_cost\_sweep.py} & \texttt{R/sampler\_cost\_sweep/eval/} \texttt{ct\_slices\_solver\_ablation.jsonl} \\
    \ref{tab:tuning-spaces} & \texttt{benchmarks/tuning/search\_spaces.py} & -- \\
    \bottomrule
  \end{tabular}
\end{table}

\end{document}